\newtheorem{theorem}{Theorem}[section]  
\newlist{todolist}{itemize}{2}
\setlist[todolist]{label=$\square$}
\newcommand{\ie}{\emph{i.e., }}
\newcommand{\eg}{\emph{e.g., }}
\newcommand{\sj}[1]{{\color{black}#1}}
\definecolor{-}{rgb}{0.25,0.41,0.88}
\definecolor{+}{rgb}{0.70,0.13,0.13}
\title{Robust Preference Optimization via Dynamic Target Margins}
\author{
    Jie Sun$^1$\hspace{0.5mm},
    Junkang Wu$^1$\hspace{0.5mm},
    Jiancan Wu$^2$\hspace{0.5mm},
    Zhibo Zhu$^1$\hspace{0.5mm}, \\
    \textbf{Xingyu Lu$^1$\hspace{0.5mm},}
    \textbf{Jun Zhou$^1$\hspace{0.5mm},}
    \textbf{Lintao Ma$^1$\footnotemark[1]\hspace{0.5mm},}
    \textbf{Xiang Wang$^3$\footnotemark[1]\hspace{0.5mm}} \\
    $^1$ Ant Group \quad $^2$ Shanghai Key Laboratory of Data Science \\
    $^3$ National University of Singapore \\
    \{kangji.sj, gavin.zzb, lintao.mlt, sing.lxy\}@antgroup.com \\
    \{jkwu0909, wujcan\}@gmail.com \\
    jun.zhoujun@antfin.com \quad xiangwang@u.nus.edu
}
\begin{document}
\maketitle
\footnotetext[1]{Corresponding authors.} 
\begin{abstract}
The alignment of Large Language Models (LLMs) is crucial for ensuring their safety and reliability in practical applications. 
Direct Preference Optimization (DPO) has emerged as an efficient method that directly optimizes models using preference pairs, significantly reducing resource demands.
However, the effectiveness of DPO heavily depends on the data quality, which is frequently compromised by noise.
In this work, we propose $\gamma$-PO, a dynamic target margin preference optimization algorithm that adjust reward margins at the pairwise level.
By introducing instance-specific margin calibration, $\gamma$-PO strategically prioritizes high-confidence pairs (those demonstrating higher reward margins) while suppressing potential noise from ambiguous pairs.
Moreover, $\gamma$-PO is a plug-and-play method, compatible with variants of DPO that rely on reward margin between preference pairs.
Across benchmarks such as AlpacaEval2 and Arena-Hard, $\gamma$-PO achieves an average 4.4\% improvement over other baselines, setting new benchmarks for state-of-the-art performance.
Additionally, $\gamma$-PO requires minimal code changes and has a negligible impact on training efficiency, making it a robust solution for enhancing LLMs alignment.
Our codes are available at \href{https://github.com/sunjie279/gammaPO}{https://github.com/sunjie279/gammaPO}.

\end{abstract}
\section{Introduction}

\begin{figure}[t]
  \includegraphics[width=\columnwidth]{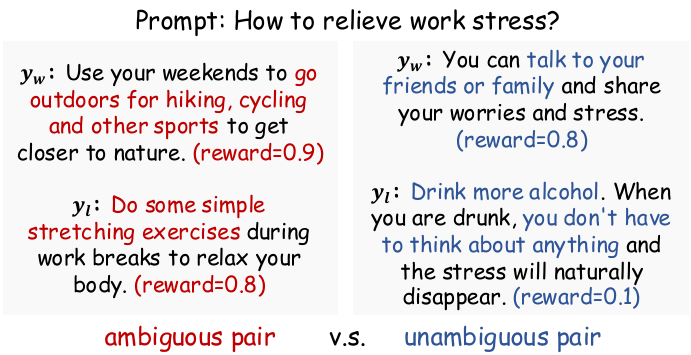}
  \caption{Comparison of ambiguous and unambiguous sample pairs. Ambiguous pairs exhibit narrow reward margins, indicating low confidence in model predictions, whereas unambiguous pairs demonstrate wide reward margins, reflecting high prediction confidence.}
  \label{fig:motivation}
\end{figure}

The alignment of Large Language Models (LLMs) with human values remains critical for their safe deployment \cite{dong2024rlhfworkflowrewardmodeling}.
While Reinforcement Learning from Human Feedback (RLHF) \cite{ouyang2022RLHF} pioneered model alignment through reward modeling and Proximal Policy Optimization (PPO) \cite{schulman2017PPO}, its computational intensity and training instability \cite{zheng2023secretsrlhflargelanguage} motivate simpler alternatives.
Direct Preference Optimization (DPO) \cite{rafailov2024DPO} offers an efficient alternative by directly optimizing preference probabilities with human preference pairs, bypassing the need for explicit reward modeling.

\begin{figure}[t]
  \includegraphics[width=\columnwidth]{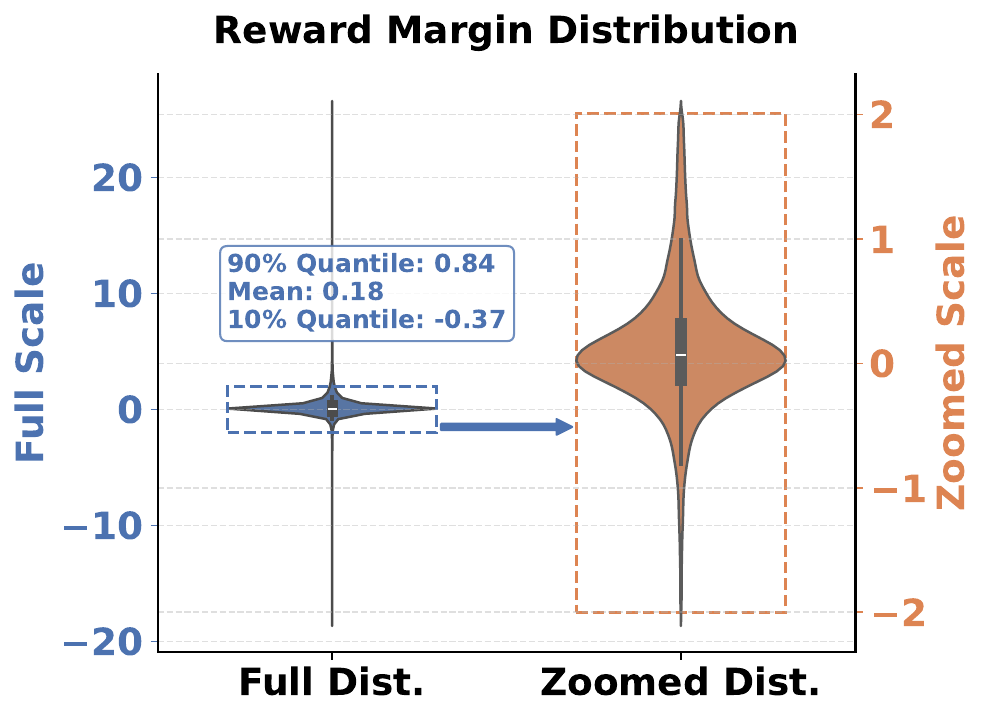}
  \caption{Distribution of the reward margin for the Mistral model using the SimPO objective on the Ultrafeedback Binarized dataset. The \textbf{\textcolor[HTML]{4C72B0}{blue}} violin plot represents the full range of the original distribution, while the \textbf{\textcolor[HTML]{DD8452}{brown}} plot provides a zoomed-in view of the distribution, focusing on the central values close to zero.}
  \label{fig:distribution}
\end{figure}

However, DPO exhibits significant performance \sj{degradation when dealing with data that presents uncertain or ambiguous reward signals (see Figure \ref{fig:motivation}, left panel).
In such cases, narrow reward margins reflect inherent ambiguity in response-reward comparisons, which can lead models to overfit to weak or conflicting preference signals rather than capturing consistent human preferences. 
This challenge is particularly critical as our empirical analysis in Figure \ref{fig:distribution} reveals that a substantial majority of training pairs are concentrated around zero reward margin, necessitating more attention and methodological improvements to effectively handle such ambiguous instances.}

Existing robust alignment approaches typically adopt two distinct paradigms:
(1) applying soft labels through label smoothing techniques to reduce overconfidence in potentially noisy annotations~\cite{chowdhury2024rDPO}, and
(2) implementing rule-based data filtering mechanisms to eliminate ambiguous preference pairs~\cite{wu2024beta_DPO}.
Both categories face critical limitations:
\begin{itemize}
  \item \textbf{One-Size-Fits-All Objective:}
  Current methods \cite{chowdhury2024rDPO,meng2024simpo,amini2024DPOwithoffset,rafailov2024DPO} employ \sj{uniform} loss functions that treat all preference pairs equally, disregarding varying confidence scores across samples.
  
  \item \textbf{Strong Theoretical Assumptions:}
  Prior work requires either known label flip rates~\cite{chowdhury2024rDPO} or specific reward margin distributions \cite{wu2024beta_DPO}, coupled with complex auxiliary mechanisms such as $\beta$-guided filtering that significantly limit practical adoption.
\end{itemize}

\sj{For modeling preference pair ambiguity,} the Bradley-Terry model~\cite{bradley1952BTmodel} provides a probabilistic foundation through $P(y_w \succ y_l) = \sigma(r_w - r_l)$, where smaller reward margin indicates higher uncertainty. 
Building on this insight, recent work SimPO~\cite{meng2024simpo} introduces a fixed target margin $\gamma_0$ to reshape the reward landscape via $\sigma(r_w - r_l - \gamma_0)$. 
While effective in clean data scenarios, this rigid margin fails to adapt to the spectrum of uncertainty inherent in real-world preference pairs. This limitation motivates our key research question:
\begin{center}
\textit{Can adaptive target margin adjustment based on reward margin enhance alignment robustness?}
\end{center}

We propose $\gamma$-PO, a plug-and-play method that dynamically adjusts target margins $\gamma_i$ for each preference pair. Our approach employs confidence-aware margin scaling: for unambiguous pairs with a significant difference in rewards, we increase the target margins to strengthen gradient signals during training. Conversely, for ambiguous pairs with small reward margins, we reduce the target margins to mitigate the influence of uncertain label. This dual mechanism provides two complementary benefits:
(1) Algorithm-agnostic design compatible with any reward margin-based method (\eg DPO, SimPO),
and (2) Theoretical connections to dynamic label smoothing \cite{su2022dsladynamicsmoothlabel}. 
\sj{We derive that $\gamma$-PO inherently implements \textit{dynamic label smoothing} through its dynamic target margins. 
Specifically, smaller reward margins induce larger $\varepsilon$ (\ie smoother labels), effectively smoothing labels for ambiguous pairs. Conversely, larger margins reduce $\varepsilon$ (\ie sharper labels), sharpening supervision for unambiguous samples.}

Experiments across four base models (LLaMA-3-8B-Instruct, Mistral-7B-Instruct, Gemma-2-9B-Instruct, and Qwen-2.5-7B-Instruct) show $\gamma$-PO achieves 4.4\% average gains on AlpacaEval2~\cite{li2023alpacaeval}, Arena-Hard~\cite{arenahard2024_ArenaHard}, and MT-Bench~\cite{mt_bench}, with minimal computational overhead.
Additionally, our plug-and-play approach requires only a few lines of code and has a negligible impact on training efficiency, yet significantly enhances performance.


\section{Preliminaries}

\subsection{Reinforcement Learning from Human Feedback}
While LLMs are pre-trained on extensive datasets, those undergoing only supervised fine-tuning may output harmful or helpless content.
To better align LLMs with human preferences, a widely adopted method is Reinforcement Learning from Human Feedback~\cite{ouyang2022RLHF}, which 
involves three main steps: supervised fine-tuning \cite{zhou2024lima-sft, ding2023enhancing-sft}, reward model training \cite{chen2024odin-reward, gao2023scaling-reward}, and policy model optimization. 
Specifically, the policy model is trained using Proximal Policy Optimization (PPO) \cite{anthony2017thinking-ppo} that maximizes response rewards with Kullback-Leibler (KL) regularization~\cite{kullback1951KLinformation} between the policy model $\pi_\theta$ and a reference model $\pi_\text{ref}$:
\begin{equation}
\begin{aligned}
    &\max_{\pi_\theta}\mathbb{E}_{x\sim\mathcal{D},y\sim\pi_\theta(y|x)} \\
    &\left[r_\phi(x,y)-\beta\mathbb{D}_\text{KL}[\pi_\theta(y|x)||\pi_\text{ref}(y|x)]\right],
\end{aligned}
\end{equation}
where response $y$ is generated from policy $\pi_\theta$ based on the prompt $x$ in dataset $\mathcal{D}$, and $r_\phi$ is the reward model. 
To estimate the labeling uncertainty of tuple $(x,y_w,y_l)$, Bradley-Terry (BT)
model~\cite{bradley1952BTmodel} defines the success probability:
\begin{equation}
\begin{aligned}
    \mathbb{P}(y_w \succ y_l \mid x) 
    &= \sigma (r_\phi(x, y_w) - r_\phi(x, y_l)),
\end{aligned}
\end{equation}
where $\sigma$ is the sigmoid function.

\subsection{Direct Preference Optimization}
DPO \cite{rafailov2024DPO} is a prominent alignment method that directly optimizes the policy model, eliminating the need for training an explicit reward model and significantly reducing the resources required for alignment:
\begin{equation}
\begin{aligned}
    &\mathcal{L}_{\text{DPO}}(\pi_\theta;\pi_{\text{ref}}) = -\mathbb{E}_{(x,y_w,y_l) \sim\mathcal{D}} \\ 
    &\left[ \log \sigma\left( \beta \log \frac{\pi_\theta(y_w|x)}{\pi_{\text{ref}}(y_w|x)} - \beta \log \frac{\pi_\theta(y_l|x)}{\pi_{\text{ref}}(y_l|x)} \right) \right],
\end{aligned}
\end{equation}
where $y_w$ is the prefered response over $y_l$, and $\beta$ is the hyperparameter.

\subsubsection{Preference Optimization with Margin}
\label{sec:marginPO}
Recent studies such as ODPO \cite{amini2024DPOwithoffset} and SimPO \cite{meng2024simpo} propose to modulate the reward difference between preferred and less preferred samples by introducing a target margin $\gamma_0$. We express the generalized preference optimization with a target margin through a unified loss function:
\begin{equation}
\label{marginPO}
\mathcal{L}_\text{margin PO} = -\mathbb{E}_{(x,y_w,y_l)\sim\mathcal{D}} \left[\log \sigma\left( r_w - r_l - \gamma_0 \right)\right].
\end{equation}
The definition of $r_w$ and $r_l$ in each method are summarized in Table \ref{tab:marginPO}.

\subsubsection{Robust Preference Optimization}
To address the challenge of noisy preference data, Robust DPO (rDPO) \cite{chowdhury2024rDPO} has been proposed. rDPO enhances robustness by explicitly incorporating the label flip rate into its formulation, thereby reducing sensitivity to erroneous preference flips. 
The loss function for rDPO is defined as follows:
\begin{equation}
\begin{aligned}
    \mathcal{L}_\text{rDPO}=(1-\varepsilon)\mathcal{L}_\text{DPO}(y_w\succ y_l) + \varepsilon \mathcal{L}_\text{DPO}(y_l\succ y_w), \\
\end{aligned}
\end{equation}
where $\varepsilon$ represents the flip rate of labels. This formulation ensures that when there is no label flip ($\varepsilon = 0$), the rDPO loss reduces to the standard DPO loss. 
rDPO shares similarities with label smoothing techniques, which introduce regularization to soften predictions' confidence. 
In rDPO, the parameter $\varepsilon$ can be seen as a smoothing factor that controls the degree of confidence for the predictions. This provides a mechanism analogous to label smoothing, specifically tailored for handling noisy preferences optimization.

\begin{table}
  \centering
  \begin{tabular}{lcc}
    \toprule
    \textbf{Methods} & \textbf{$r_w$} & \textbf{$r_l$} \\
    \midrule
    ODPO & $\beta \log \frac{\pi_\theta(y_w|x)}{\pi_{\text{ref}}(y_w|x)}$ & $\beta \log \frac{\pi_\theta(y_l|x)}{\pi_{\text{ref}}(y_l|x)}$ \\
    SimPO & $\frac{\beta}{|y_w|}\log\pi_\theta(y_w|x)$ & $\frac{\beta}{|y_l|}\log\pi_\theta(y_l|x)$ \\
    \bottomrule
  \end{tabular}
    \caption{Definitions of $r_w$ and $r_l$ for alignment method ODPO and SimPO.}
  \label{tab:marginPO}
\end{table}
\begin{figure}[t]
  \includegraphics[width=\columnwidth]{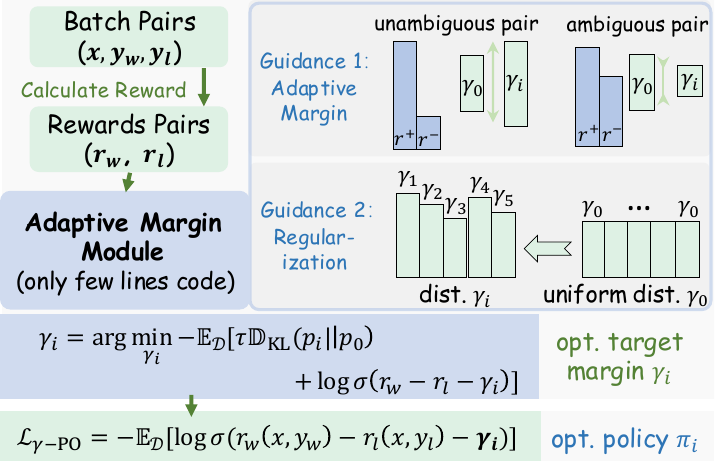}
  \caption{The Dynamic Target Margin Module dynamically adjusts the adaptive target margin ($\gamma_i$) through reward-driven optimization, guided by the dual mechanisms described in Section \ref{sec:guidances}. The optimized $\gamma_i$ subsequently replaces the static margin ($\gamma_0$) in the policy optimization loss function, enabling adaptive margin adjustment throughout the training process.}
  \label{fig:framework}
\end{figure}

\section{Method}
In this section, we first present the methodology for deriving the adaptive target margin. Next, we present our proposed method, $\gamma$-PO, along with its two variants, $\gamma$-DPO and $\gamma$-SimPO. Finally, we conduct a comprehensive comparative analysis with rDPO~\cite{chowdhury2024rDPO}, highlighting its superior robustness and alignment capabilities.

\subsection{Derivation of Adaptive Target Margin} \label{sec:guidances}

The core objective of preference optimization with margin is to enforce $r_w > r_l + \gamma_0$ where $\gamma_0$ is a fixed margin.
Our framework extends this by introducing instance-specific margins to address ambiguous reward signal in preference optimization, ensuring that:
\begin{equation}
r_w - r_l > \gamma_i \Longleftrightarrow r_l - r_w + \gamma_i <0.
\end{equation}
To optimize $\gamma_i$, we focus on instances that violate this condition (where $r_l - r_w + \gamma_i \geq 0$).
This leads to the following optimization objective:
\begin{equation}
\label{equ:loss-gamma1}
    \min_{\gamma_i} \max\left\{0, r_l - r_w + \gamma_i\right\}.
\end{equation}
For computational efficiency, we reformulate this objective using the logarithmic approximation (\ie $\max\{0, y-x\} \approx -\log \frac{\exp(x)}{\exp{(x)}+\exp{(y)}}$), yielding:
\begin{equation}
\begin{aligned}
&\min_{\gamma_i} -\log\frac{\exp(r_w)}{\exp(r_w)+\exp(r_l)\exp(\gamma_i)}\\
    = &\min_{\gamma_i} -\log\sigma(r_w - r_l - \gamma_i).
\end{aligned}
\end{equation}
This logarithmic formulation enhances numerical stability during gradient computation while maintaining the theoretical guarantees.

Our dynamic target margin $\gamma_i$ are guided by two principles:
\textbf{(1) Adaptive Margin: }The preference pairs with larger reward margins represent higher-confidence annotations and should receive stronger learning signals, while ambiguous pairs require more conservative treatment, and 
\textbf{(2) Regularization: } To mitigate the risk of margin collapse (\ie $\gamma_i\rightarrow -\infty$), the adaptive target margin distribution must preserve a reasonable scale relative to a baseline target margin $\gamma_0$. To achieve this, we regularize the target margin distribution by minimizing the KL divergence with respect to a uniform prior. 
This dual-objective formulation is expressed as:
\begin{equation}
\begin{aligned}
    \gamma_i 
    = \arg\min_{\gamma_i} -\mathbb{E}_{\mathcal{D}} [ &\log\sigma(r_w - r_l - \gamma_i) \\
    &- \tau \, \mathbb{D}_\text{KL}(p_i \| p_0) ],
\end{aligned}
\label{equ:loss-gamma2}
\end{equation}
where $p_i=\gamma_i / \sum_j \gamma_j$ and $p_0$ represent the normalized $\gamma_i$ distribution and the uniform distribution, respectively. Regularization coefficient $\tau$ controls the trade-off between adaptation flexibility and stability.  
The complete derivation and computational details of $\gamma_i$ are provided in Appendix \ref{apd:calculate-gamma}. 
For clarity, we summarize the key notations used in this paper in Table \ref{tab:notations}.

\subsection{Generalizability of $\gamma$-PO}

Building upon the dynamic target margin $\gamma_i$ derived in Section \ref{sec:marginPO}, we extend the conventional fixed margin loss in Sec \ref{sec:marginPO} to a dynamic, instance-specific formulation. Consequently, we propose $\gamma$-PO, a method that adapts the margin for each instance. The final policy optimization objective is consequently defined as:
\begin{equation}
\label{equ:loss}
    \mathcal{L}_\text{$\gamma$-PO} = -\mathbb{E}_{\mathcal{D}}[\log\sigma(r_w - r_l - \gamma_i)].
\end{equation}
The proposed $\gamma$-PO framework demonstrates remarkable generalizability as an algorithm-agnostic method. It can be seamlessly integrated with existing approaches, yielding enhanced variants such as $\gamma$-DPO:
\begin{equation}
\begin{aligned}
    &\mathcal{L}_\text{$\gamma$-DPO} = -\mathbb{E}_{\mathcal{D}}[\log\sigma (r_w - r_l - \gamma_i)], \quad \text{where} \\
    &r_w = \beta\log\frac{\pi_\theta(y_w|x)}{\pi_\text{ref}(y_w|x)}, \; r_l = \beta\log\frac{\pi_\theta(y_l|x)}{\pi_\text{ref}(y_l|x)},
\end{aligned}
\end{equation}
and $\gamma$-SimPO:
\begin{equation}
\begin{aligned}
    &\mathcal{L}_\text{$\gamma$-SimPO} = -\mathbb{E}_{\mathcal{D}}[\log\sigma  (r_w - r_l - \gamma_i)], \quad \text{where}\\
    &r_w = \frac{\beta}{|y_w|}\log\pi_\theta(y_w|x), \; r_l = \frac{\beta}{|y_w|}\log\pi_\theta(y_w|x).
\end{aligned}
\end{equation}

\begin{algorithm}[t]
\caption{$\gamma$-PO Algorithm}\label{tab:pseudocode}
\begin{algorithmic}[1] 
\State \textbf{Input:} SFT model $\pi_\text{SFT}$, dataset $\mathcal{D}$, total number of iterations $T$, learning rate $\alpha$,
\State Initial policy model $\pi_\theta=\pi_\text{SFT}$,
\For{$t=0$ to $T$}
    \State Sample a batch of tuples $(x,y_w,y_l)$ from $\mathcal{D}$,
    \State Calculate the rewards ($r_w, r_l$) via Table \ref{tab:marginPO},
    \State Calculate the adaptive margin $\gamma_i$ using Equation \eqref{equ:loss-gamma2},
    \State Update the policy parameters $\theta$ for $\pi_\theta$ by applying gradient descent to Equation \eqref{equ:loss}: $\theta_{t+1} = \theta_t - \alpha \nabla_{\theta_t} \mathcal{L}_{\text{$\gamma$-PO}}$.
\EndFor
\end{algorithmic}
\end{algorithm}

\subsection{Relation of $\gamma$-PO and rDPO}
To bridge $\gamma$-PO with existing robust methods, we analyze its connection to rDPO through their loss formulations. The rDPO objective applies label smoothing to the standard DPO loss:

\begin{equation}
\begin{aligned}
\mathcal{L}_\text{rDPO} = -\varepsilon\, &\mathbb{E}_\mathcal{D}[\log\sigma(r_l - r_w - \gamma_0)] \\
- (1 - \varepsilon) \, &\mathbb{E}_\mathcal{D} [\log\sigma(r_w - r_l - \gamma_0)],
\end{aligned}
\end{equation}

\noindent where $\varepsilon$ controls label smoothing intensity. Our key insight is that $\gamma$-PO implicitly implements \textit{adaptive label smoothing} through dynamic margins. We formalize this connection via:
\begin{theorem}
\label{thm:Approximation}
Let $\delta = \gamma_i - \gamma_0$ and $m = r_w - r_l$. When $|\delta| \ll |m|$, equating $\mathcal{L}_\text{rDPO}$ and $\mathcal{L}_\text{$\gamma$-PO}$ yields the approximation:
\begin{equation}
\varepsilon \approx \frac{\delta [1 - \sigma(m - \gamma_0)]}{\log(1 + e^{m+\gamma_0}) - \log(1 + e^{\gamma_0-m})}.
\end{equation}
\end{theorem}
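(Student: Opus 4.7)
The plan is to equate $\mathcal{L}_{\text{rDPO}}$ and $\mathcal{L}_{\text{$\gamma$-PO}}$ at the per-sample (integrand) level---which is natural because $\gamma_i$ is defined instance by instance, so the common factor $-\mathbb{E}_{\mathcal{D}}[\cdot]$ drops out---then substitute $\gamma_i = \gamma_0 + \delta$ and Taylor-expand in $\delta$. After moving $\log\sigma(m-\gamma_0)$ to the left, this reduces to
\begin{equation*}
\log\sigma(m - \gamma_0 - \delta) - \log\sigma(m - \gamma_0) \;=\; \varepsilon\bigl[\log\sigma(-m - \gamma_0) - \log\sigma(m - \gamma_0)\bigr],
\end{equation*}
which is the identity I would work with.

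For the LHS I would invoke $\tfrac{d}{dx}\log\sigma(x) = 1 - \sigma(x)$; under the hypothesis $|\delta|\ll|m|$ the first-order expansion dominates and returns $-\delta\,[1-\sigma(m-\gamma_0)]$, with a quadratic remainder that is uniformly controlled because the second derivative $-\sigma(1-\sigma)$ has magnitude at most $1/4$. For the RHS I would use the identity $\log\sigma(x) = -\log(1+e^{-x})$ to rewrite $\log\sigma(-m-\gamma_0) = -\log(1 + e^{m+\gamma_0})$ and $\log\sigma(m-\gamma_0) = -\log(1 + e^{\gamma_0-m})$, so the bracket collapses to $-\bigl[\log(1+e^{m+\gamma_0}) - \log(1+e^{\gamma_0-m})\bigr]$. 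The two negative signs cancel upon equating LHS and RHS, and isolating $\varepsilon$ yields the claimed formula directly.

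The main obstacle is interpretive rather than computational: one must justify passing from an expectation-level identity between the two losses to a pointwise identity in the integrand, which is legitimate here precisely because $\gamma_i$ is allowed to depend on the instance through $m$, so what the theorem really identifies is a per-sample effective smoothing level $\varepsilon(m,\delta)$. A secondary point worth noting is that the denominator $\log(1+e^{m+\gamma_0}) - \log(1+e^{\gamma_0-m})$ vanishes at $m=0$; this degeneracy lies outside the regime $|\delta|\ll|m|$ in which the approximation is asserted, so it is consistent with the hypothesis, but it marks the natural boundary of validity and should be acknowledged.
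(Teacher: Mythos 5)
Your proposal is correct and follows essentially the same route as the paper's proof: equate the two losses per sample, first-order Taylor-expand in $\delta$ (the paper expands $\log(1+e^{-(m-\gamma_0-\delta)})$ directly, you equivalently use $\tfrac{d}{dx}\log\sigma(x)=1-\sigma(x)$), and solve for $\varepsilon$. Your added remarks on the expectation-to-pointwise step and the vanishing denominator at $m=0$ are sound but do not change the argument.
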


This analysis demonstrates that $\gamma$-PO inherently implements \textit{adaptive label smoothing} through its dynamic margins. The derived relationship between $\delta$ (margin adjustment) and $\varepsilon$ (smoothing parameter) reveals that smaller reward margin $m = r_w - r_l$ automatically induce larger $\varepsilon$ values, effectively smoothing labels for ambiguous pairs. Conversely, a larger margin reduces $\varepsilon$, sharpening supervision for high-confidence samples. Crucially, the negative gradient $\partial\varepsilon/\partial d < 0$ formalizes this inverse relationship, enabling $\gamma$-PO to self-regulate label certainty based on reward margin magnitudes. By contrast, rDPO's static $\varepsilon$ requires pre-defined label flip rates and lacks this instance-level adaptation. 
This perspective provides new insights into the interpretation of $\gamma$-PO. The detailed proof of this theorem is provided in Appendix \ref{apd:poofthm}.

\setlength{\tabcolsep}{3pt}\label{main_result_err}
\begin{table*}[!t]
\centering
\resizebox{1.0\textwidth}{!}{
\begin{tabular}{lcccccccccccc}
\toprule
\multirow{4}{*}{\textbf{Method}} & \multicolumn{6}{c}{\textbf{Llama3-Instruct (8B)}} & \multicolumn{6}{c}{\textbf{Mistral-Instruct (7B)}} \\ 
\cmidrule(lr){2-7}\cmidrule(lr){8-13}
& \multicolumn{3}{c}{\textbf{AlpacaEval 2}} & \multicolumn{2}{c}{\textbf{Arena-Hard}} & \multicolumn{1}{c}{\textbf{Average}} & \multicolumn{3}{c}{\textbf{AlpacaEval 2}} & \multicolumn{2}{c}{\textbf{Arena-Hard}} & \multicolumn{1}{c}{\textbf{Average}}\\
\cmidrule(lr){2-4}\cmidrule(lr){5-6}\cmidrule(lr){7-7}\cmidrule(lr){8-10}\cmidrule(lr){11-12}\cmidrule(lr){13-13}
& LC(\%) & WR(\%) & err(\%) & WR(\%) & 95\%CI & & LC(\%) & WR(\%) & err(\%) & WR(\%) & 95\%CI & \\
\midrule
SFT & 28.2 & 27.4 & 1.38 & 24.3 & (-1.7, 1.9) & 26.6 & 21.3 & 16.0 & 1.15 & 13.1 & (-1.5, 1.1) & 16.8 \\ 
\midrule
IPO & 38.3 & 35.9 & 1.48 & 31.4 & (-2.3, 2.5) & 35.2 & 24.0 & 19.3 & 1.21 & 16.3 & (-1.6, 1.5) & 19.9 \\ 
KTO & 34.9 & 32.5 & 1.45 & 27.8 & (-2.8, 3.5) & 31.7 & 18.9 & 20.2 & 1.23 & 14.4 & (-1.6, 1.6) & 17.8 \\ 
CPO & 27.9 & 26.8 & 1.36 & 25.4 & (-2.3, 2.9) & 26.7 & 25.3 & 22.6 & 1.30 & 18.4 & (-1.7, 1.9) & 22.1 \\
ORPO & 32.0 & 30.3 & 1.41 & 25.9 & (-1.9, 2.4) & 29.4 & 24.0 & 21.8 & 1.27 & 20.4 & (-2.1, 2.0) & 22.1 \\
R-DPO & 47.9 & 43.5 & 1.48 & 32.9 & (-2.5, 2.7) & 41.4 & 24.8 & 22.0 & 1.27 & 17.1 & (-1.6, 1.7) & 21.3 \\
rDPO & 47.8 & 43.7 & 1.48 & 34.4 & (-2.2, 2.4) & 42.0 & 23.7 & 20.9 & 1.25 & 16.3 & (-1.5, 1.7) & 20.3 \\
DPO & 46.8 & 42.7 & 1.48 & 32.7 & (-2.6, 2.7) & 40.7 & 22.5 & 20.2 & 1.20 & 17.1 & (-2.0, 1.6) & 19.9 \\
$\beta$-DPO & 48.2 & 44.0 & 1.48 & \underline{36.4} & (-2.0, 2.1) & 42.9 & 23.3 & 20.6 & 1.22 & 17.5 & (-1.4, 1.6) & 20.5 \\
SimPO & \underline{54.3} & \underline{46.6} & 1.53 & 31.5 & (-2.4, 3.0) & \underline{44.1} & \underline{29.3} & \underline{30.8} & 1.38 & \underline{20.7} & (-2.0, 1.8) & \underline{26.9} \\
$\gamma$-SimPO & \textbf{55.4} & \textbf{48.7} & 1.48 & \textbf{37.3} & (-2.6, 2.8) & \textbf{47.1} & \textbf{31.2} & \textbf{33.0} & 1.39 & \textbf{22.9} & (-1.7, 2.2) & \textbf{29.0} \\
\midrule
Improve & \color{+}{+2.0\%} & \color{+}{+4.5\%} & - & \color{+}{+6.8\%} & - & \color{+}{+2.5\%} & \color{+}{+6.5\%} & \color{+}{+7.1\%} & - & \color{+}{+10.6\%} & - & \color{+}{+7.8\%}\\
\midrule
\multirow{4}{*}{\textbf{Method}} & \multicolumn{6}{c}{\textbf{Gemma2-Instruct (9B)}} & \multicolumn{6}{c}{\textbf{Qwen2.5-Instruct (7B)}} \\ 
\cmidrule(lr){2-7}\cmidrule(lr){8-13}
& \multicolumn{3}{c}{\textbf{AlpacaEval 2}} & \multicolumn{2}{c}{\textbf{Arena-Hard}} & \multicolumn{1}{c}{\textbf{Average}} & \multicolumn{3}{c}{\textbf{AlpacaEval 2}} & \multicolumn{2}{c}{\textbf{Arena-Hard}} & \multicolumn{1}{c}{\textbf{Average}} \\
\cmidrule(lr){2-4}\cmidrule(lr){5-6}\cmidrule(lr){7-7} \cmidrule(lr){8-10}\cmidrule(lr){11-12}\cmidrule(lr){13-13} 
& LC (\%) & WR (\%) & err (\%) & WR (\%) & 95\%CI (\%) & & LC (\%) & WR (\%) & err (\%) & WR (\%) & 95\%CI (\%) & \\
\midrule
SFT & 50.2 & 38.1 & 1.44 & 40.8 & (-2.3, 2.1) & 43.0 & 23.2 & 22.2 & 1.29 & 35.9 & (-3.1, 3.1) & 27.1 \\
\midrule
IPO & 61.3 & 57.2 & 1.45 & 56.7 & (-2.8, 2.6) & 58.4 & 25.1 & 25.4 & 1.34 & 43.7 & (-4.4, 2.5) & 31.4 \\ 
KTO & 60.7 & 54.5 & 1.46 & 51.8 & (-3.3, 2.8) & 55.7 & 32.6 & 30.5 & 1.38 & 46.6 & (-3.6, 3.0) & 36.6 \\ 
CPO & 50.3 & 38.8 & 1.45 & 41.8 & (-2.4, 2.1) & 43.6 & 23.3 & 19.5 & 1.29 & 38.3 & (-3.1, 3.1) & 27.0 \\ 
ORPO & 54.9 & 46.4 & 1.48 & 44.5 & (-2.3, 2.2) & 48.6 & 26.4 & 23.0 & 1.29 & 38.3 & (-3.2, 2.7) & 29.2 \\ 
R-DPO & 66.9 & \textbf{\underline{65.9}} & 1.37 & 60.9 & (-2.3, 2.8) & 64.6 & 33.2 & 31.7 & 1.39 & 53.1 & (-3.6, 3.5) & 39.3 \\ 
rDPO & 68.2 & \textbf{\underline{65.9}} & 1.39 & 59.1 & (-2.4, 2.9) & 64.4 & \underline{34.1} & \underline{33.3} & 1.42 & \textbf{65.2} & (-2.0, 2.6) & \underline{44.2} \\
DPO & 67.8 & 65.8 & 1.39 & 60.3 & (-2.0, 2.3) & 64.6 & 32.3 & 31.8 & 1.40 & 54.3 & (-3.7, 2.4) & 39.5 \\ 
$\beta$-DPO & 68.0 & \textbf{\underline{65.9}} & 1.37 & \underline{61.0} & (-2.2, 2.2) & \underline{65.0} & 32.4 & 31.4 & 1.39 & 61.0 & (-2.2, 2.2) & 41.6 \\ 
SimPO & \underline{70.9} & 64.5 & 1.43 & 55.3 & (-2.4, 2.0) & 63.6 & 33.3 & 28.2 & 1.37 & 59.0 & (-2.9, 3.0) & 40.2 \\ 
$\gamma$-SimPO & \textbf{72.0} & \textbf{65.9} & 1.42 & \textbf{62.0} & (-2.5, 1.8) & \textbf{66.6} & \textbf{39.0} & \textbf{34.3} & 1.46 & \underline{64.8} & (-2.1, 2.3) & \textbf{46.0} \\ 
\midrule
Improve & \color{+}{+1.6\%} & \color{+}{+0.0\%} & - & \color{+}{+1.6\%} & - & \color{+}{+2.5\%} & \color{+}{+14.4\%} & \color{+}{+3.0\%} & - & \color{-}{-0.6\%} & - & \color{+}{+4.1\%} \\
\bottomrule
\end{tabular}
}
\caption{The results from AlpacaEval and Arena-Hard under four different settings are shown. ``LC'' and ``WR'' represent the Length-Controlled and Raw Win Rates, respectively. \textbf{Bold} indicates the best performance for each metric, while \underline{underlined} values represent the best performance excluding our methods (our method is $\gamma$-SimPO, $i.e.$, our plug-and-play modules applied to SimPO). ``err'' refers to the standard error, while ``95\% CI'' stands for the 95\% confidence interval. ``Improve'' denotes the percentage improvement of the bold value over the underlined one. Each metric, except for ``err'' and ``95\% CI'', is considered better when it has a higher value.}
\label{tab:main_result_err}
\end{table*}
\setlength{\tabcolsep}{2pt}

\setlength{\tabcolsep}{3pt}
\begin{table*}[!t]
\centering
\tiny
\resizebox{1.0\textwidth}{!}{
\begin{tabular}{lcccccccc}
\toprule
\multirow{4}{*}{\textbf{Method}} & \multicolumn{4}{c}{\textbf{Llama3-Instruct (8B)}} & \multicolumn{4}{c}{\textbf{Mistral-Instruct (7B)}} \\ 
\cmidrule(lr){2-5}\cmidrule(lr){6-9}
& \multicolumn{2}{c}{\textbf{AlpacaEval 2}} & \multicolumn{1}{c}{\textbf{Arena-Hard}} & \multicolumn{1}{c}{\textbf{Average}} & \multicolumn{2}{c}{\textbf{AlpacaEval 2}} & \multicolumn{1}{c}{\textbf{Arena-Hard}} & \multicolumn{1}{c}{\textbf{Average}} \\
\cmidrule(lr){2-3}\cmidrule(lr){4-4}\cmidrule(lr){5-5}\cmidrule(lr){6-7}\cmidrule(lr){8-8}\cmidrule(lr){9-9}
& LC(\%) & WR(\%) & WR(\%) & Avg. (rank) & LC(\%) & WR(\%) & WR(\%) & Avg. (rank) \\
\midrule
DPO & 46.8$^\text{+0.0}$ & 42.7$^\text{+0.0}$ & 32.7$^\text{+0.0}$ & 40.7 (4) & 
      22.5$^\text{+0.0}$ & 20.2$^\text{+0.0}$ & 17.1$^\text{+0.0}$ & 19.9 (4) \\
rDPO & 47.8$^{\color{+}\text{+1.0}}$ & 43.7$^{\color{+}\text{+1.0}}$ & 34.4$^{\color{+}\text{+1.7}}$ & 42.0 (2) & 
       23.7$^{\color{+}\text{+1.2}}$ & 20.9$^{\color{+}\text{+0.7}}$ & \textbf{17.5}$^{\color{+}\text{+0.4}}$ & 20.7 (2) \\
$\beta$-DPO & \textbf{48.2}$^{\color{+}\text{+1.4}}$ & \textbf{44.0}$^{\color{+}\text{+1.3}}$ & \textbf{36.4}$^{\color{+}\text{+3.7}}$ & \textbf{42.9 (1)} & 
       23.3$^{\color{+}\text{+0.8}}$ & 20.6$^{\color{+}\text{+0.4}}$ & \textbf{17.5}$^{\color{+}\text{+0.4}}$ & 20.5 (3) \\
$\gamma$-DPO & 47.8$^{\color{+}\text{+1.0}}$ & 43.0$^{\color{+}\text{+0.3}}$ & 35.1$^{\color{+}\text{+2.4}}$ & 42.0 (2) &
       \textbf{23.8}$^{\color{+}\text{+1.3}}$ & \textbf{21.7}$^{\color{+}\text{+1.5}}$ & \textbf{17.5}$^{\color{+}\text{+0.4}}$ & \textbf{21.0 (1)}\\
\midrule
SimPO & 54.3$^\text{+0.0}$ & 46.6$^\text{+0.0}$ & 31.5$^\text{+0.0}$ & 44.1 (4) & 
        29.3$^\text{+0.0}$ & 30.8$^\text{+0.0}$ & 20.7$^\text{+0.0}$ & 26.9 (4) \\
rSimPO & 53.0$^{\color{-}\text{-1.3}}$ & 45.5$^{\color{-}\text{-1.1}}$ & 36.5$^{\color{+}\text{+5.0}}$ & 45.0 (3) &
         28.9$^{\color{-}\text{-0.4}}$ & 30.3$^{\color{-}\text{-0.5}}$ & 22.2$^{\color{+}\text{+1.5}}$ & 27.1 (3) \\
$\beta$-SimPO & 54.8$^{\color{+}\text{+0.5}}$ & 48.1$^{\color{+}\text{+1.5}}$ & 32.6$^{\color{+}\text{+1.1}}$ & 45.2 (2) &
         29.4$^{\color{+}\text{+0.1}}$ & 31.1$^{\color{+}\text{+0.3}}$ & 22.6$^{\color{+}\text{+1.9}}$ & 27.7 (2) \\ 
$\gamma$-SimPO & \textbf{55.4}$^{\color{+}\text{+1.1}}$ & \textbf{48.7}$^{\color{+}\text{+2.1}}$ & \textbf{37.3}$^{\color{+}\text{+5.8}}$ & \textbf{47.1 (1)} &
         \textbf{31.2}$^{\color{+}\text{+1.9}}$ & \textbf{33.0}$^{\color{+}\text{+2.2}}$ & \textbf{22.9}$^{\color{+}\text{+2.2}}$ & \textbf{29.0 (1)} \\
\midrule
\multirow{4}{*}{\textbf{Method}} & \multicolumn{4}{c}{\textbf{Gemma2-Instruct (9B)}} & \multicolumn{4}{c}{\textbf{Qwen2.5-Instruct (7B)}} \\ 
\cmidrule(lr){2-5}\cmidrule(lr){6-9}
& \multicolumn{2}{c}{\textbf{AlpacaEval 2}} & \multicolumn{1}{c}{\textbf{Arena-Hard}} & \multicolumn{1}{c}{\textbf{Average}} & \multicolumn{2}{c}{\textbf{AlpacaEval 2}} & \multicolumn{1}{c}{\textbf{Arena-Hard}} & \multicolumn{1}{c}{\textbf{Average}} \\
\cmidrule(lr){2-3}\cmidrule(lr){4-4}\cmidrule(lr){5-5}\cmidrule(lr){6-7}\cmidrule(lr){8-8}\cmidrule(lr){9-9}
& LC(\%) & WR(\%) & WR(\%) & Avg. (rank) & LC(\%) & WR(\%) & WR(\%) & Avg. (rank) \\
\midrule
DPO & 67.8$^\text{+0.0}$ & 65.8$^\text{+0.0}$ & 60.3$^\text{+0.0}$ & 64.6 (3) &
      32.3$^\text{+0.0}$ & 31.8$^\text{+0.0}$ & 54.3$^\text{+0.0}$ & 39.5 (4) \\ 
rDPO & 68.2$^{\color{+}\text{+0.4}}$ & 65.9$^{\color{+}\text{+0.1}}$ & 59.1$^{\color{-}\text{-1.2}}$ & 64.4 (4) &
      34.1$^{\color{+}\text{+1.8}}$ & 33.3$^{\color{+}\text{+1.5}}$ & 65.2$^{\color{+}\text{+10.9}}$ & 44.2 (2) \\ 
$\beta$-DPO & 68.0$^{\color{+}\text{+0.2}}$ & 65.9$^{\color{+}\text{+0.1}}$ & 61.0$^{\color{+}\text{+0.7}}$ & 65.0 (2) &
      32.4$^{\color{+}\text{+0.1}}$ & 31.4$^{\color{-}\text{-0.4}}$ & 61.0$^{\color{+}\text{+6.7}}$ & 41.6 (3) \\ 
$\gamma$-DPO & \textbf{70.6}$^{\color{+}\text{+2.8}}$ & \textbf{68.0}$^{\color{+}\text{+2.2}}$ & \textbf{64.1}$^{\color{+}\text{+3.8}}$ & \textbf{67.6 (1)} &
      \textbf{34.4}$^{\color{+}\text{+2.1}}$ & \textbf{34.3}$^{\color{+}\text{+2.5}}$ & \textbf{68.4}$^{\color{+}\text{+14.1}}$ & \textbf{45.7 (1)} \\ 
\midrule
SimPO & 70.9$^\text{+0.0}$ & 64.5$^\text{+0.0}$ & 55.3$^\text{+0.0}$ & 63.6 (4) &
        33.3$^\text{+0.0}$ & 28.2$^\text{+0.0}$ & 59.0$^\text{+0.0}$ & 40.2 (4) \\
rSimPO & 71.6$^{\color{+}\text{+0.7}}$ & 64.5$^{\color{+}\text{+0.0}}$ & 58.6$^{\color{+}\text{+3.3}}$ & 64.9 (2) &
        38.8$^{\color{+}\text{+5.5}}$ & 31.7$^{\color{+}\text{+3.5}}$ & 61.4$^{\color{+}\text{+2.4}}$ & 44.0 (2) \\ 
$\beta$-SimPO & 71.1$^{\color{+}\text{+0.2}}$ & 62.8$^{\color{-}\text{-1.7}}$ & 59.0$^{\color{+}\text{+3.7}}$ & 64.3 (3) &
        34.5$^{\color{+}\text{+1.2}}$ & 29.5$^{\color{+}\text{+1.3}}$ & 59.1$^{\color{+}\text{+0.1}}$ & 41.0 (3) \\
$\gamma$-SimPO & \textbf{72.0}$^{\color{+}\text{+1.1}}$ & \textbf{65.9}$^{\color{+}\text{+1.4}}$ & \textbf{62.0}$^{\color{+}\text{+6.7}}$ & \textbf{66.6 (1)} &
        \textbf{39.0}$^{\color{+}\text{+5.7}}$ & \textbf{34.3}$^{\color{+}\text{+6.1}}$ & \textbf{64.8}$^{\color{+}\text{+5.8}}$ & \textbf{46.0 (1)} \\ 
\bottomrule
\end{tabular}
}
\caption{The improvements in results from AlpacaEval2 and Arena-Hard across various plug-and-play methods. ``LC'' stands for Length-Controlled, while ``WR'' represents Raw Win Rates. The methods rDPO and rSimPO refer to the application of rDPO to the DPO and SimPO methods, respectively. And the method $\beta$-SimPO refer to the application of $\beta$-DPO to the SimPO method. The superscript indicates enhancements based on the baselines, such as DPO and SimPO, where $\color{+}\text{red}$ represents improvement and $\color{-}\text{blue}$ signifies degradation. Performance metrics displayed in \textbf{bold} indicate the best results among the plugins. For each metric, a higher value denotes better performance.}
\label{tab:plugin_result}
\end{table*}
\setlength{\tabcolsep}{2pt}

\section{Experiments}

This section starts with an overview of the experimental setup, including various base models, training datasets, evaluation methods, and baselines. We then present the experimental results and visualize the dynamic target reward margin along with the hyperparameter analysis.

\subsection{Experiment Setup}
\textbf{Base Model.} 
Our approach is tested on four different base models: LLaMA-3-8B-Instruct \cite{grattafiori2024llama3}, Mistral-7B-Instruct \cite{jiang2023mistral7b}, Gemma-2-9B-Instruct \cite{gemmateam2024gemma2}, and Qwen-2.5-7B-Instruct \cite{qwen2025qwen25}. The diversity of these base models provides a solid foundation for our experiments.

\noindent\textbf{Dataset.} 
Following the approach in SimPO \cite{meng2024simpo}, a strong reward model is used to re-annotate the preference datasets. In this work, five different responses are generated for each prompt from the Ultrafeedback Binarized dataset\footnote{\href{https://huggingface.co/datasets/HuggingFaceH4/ultrafeedback_binarized}{HuggingFaceH4/ultrafeedback\_binarized}} using the base models. These responses are then re-annotated using a more powerful reward model, RLHFlow/ArmoRM-Llama3-8B-v0.1\footnote{\href{https://huggingface.co/RLHFlow/ArmoRM-Llama3-8B-v0.1}{RLHFlow/ArmoRM-Llama3-8B-v0.1}}. Then, the responses with the highest and lowest scores are selected to combine the preference data.

\noindent\textbf{Evaluation.}
We use three well-established metrics, AlpacaEval2, Arena-Hard, and MT-Bench to assess LLMs alignment with human preferences.
AlpacaEval2 \cite{dubois2024LCalpacaeval} is an enhanced automated metric that evaluates LLMs performance using Win Rate (WR) and Length-Controlled Win Rate (LC) with a dataset of 805 instructions. Both the reference model and judge model are GPT-4-1106-preview~\cite{openai2024gpt4technicalreport}. The length control improves robustness to length manipulation and better aligns with human evaluations.
Arena-Hard~\cite{arenahard2024_ArenaHard} assesses LLMs by applying 500 prompts derived from the Chatbot Arena dataset. It benchmarks the LLMs against GPT-4-0314~\cite{openai2024gpt4technicalreport} as the baseline, and GPT-4-1106-preview evaluates the responses. Each response is rated using a five-point Likert scale~\cite{liang2020Likert}, which facilitates a fair comparison and emphasizes performance disparities between models.
MT-bench~\cite{mt_bench} evaluates chatbots by using 80 multi-turn dialogue prompts that are specifically designed to assess their abilities in multi-round interactions and following instructions. It benchmarks the chatbots against GPT-4, which serves as both the baseline and the evaluator for the responses. Each response is scored based on specific criteria, allowing for objective comparisons and highlighting the differences in how well various models handle complex conversational tasks.

\noindent\textbf{Baselines.} 
We compare $\gamma$-PO with several state-of-the-art preference optimization methods, including DPO \cite{rafailov2024DPO}, IPO \cite{azar2023IPO}, CPO \cite{xu2024CPO}, KTO \cite{ethayarajh2024KTO}, ORPO \cite{hong2024ORPO}, R-DPO \cite{park2024R-DPO}, rDPO \cite{chowdhury2024rDPO}, $\beta$-DPO \cite{wu2024beta_DPO}, and SimPO \cite{meng2024simpo}. We also use SFT model as a baseline.
Further information regarding the baselines and training details can be found in Appendix \ref{apd:expdetail}.

\setlength{\tabcolsep}{3pt}
\begin{table*}[!t]
\centering
\resizebox{1.0\textwidth}{!}{
\begin{tabular}{lcccccccc}
\toprule
\multirow{4}{*}{\textbf{Method}} & \multicolumn{4}{c}{\textbf{Label Flip Rate 0.1}} & \multicolumn{4}{c}{\textbf{Label Flip Rate 0.2}} \\ 
\cmidrule(lr){2-5}\cmidrule(lr){6-9}
& \multicolumn{2}{c}{\textbf{AlpacaEval 2}} & \multicolumn{1}{c}{\textbf{Arena-Hard}} & \multicolumn{1}{c}{\textbf{Average}} & \multicolumn{2}{c}{\textbf{AlpacaEval 2}} & \multicolumn{1}{c}{\textbf{Arena-Hard}} & \multicolumn{1}{c}{\textbf{Average}} \\
\cmidrule(lr){2-3}\cmidrule(lr){4-4}\cmidrule(lr){5-5}\cmidrule(lr){6-7}\cmidrule(lr){8-8}\cmidrule(lr){9-9}
& LC(\%) & WR(\%) & WR(\%) & Avg. (rank) & LC(\%) & WR(\%) & WR(\%) & Avg. (rank) \\
\midrule
SimPO & 26.2$^{\text{+0.0}}$ & 28.6$^{\text{+0.0}}$ & 20.0$^{\text{+0.0}}$ & 24.9 (4) &
        24.5$^{\text{+0.0}}$ & 26.1$^{\text{+0.0}}$ & 18.7$^{\text{+0.0}}$ & 23.1 (4) \\
rSimPO & 27.1$^{\color{+}\text{+0.9}}$ & \textbf{28.7}$^{\color{+}\text{+0.1}}$ & 21.9$^{\color{+}\text{+1.9}}$ & 25.9 (2) &
        25.8$^{\color{+}\text{+1.3}}$ & 27.0$^{\color{+}\text{+0.9}}$ & 19.6$^{\color{+}\text{+0.9}}$ & 24.1 (3) \\
$\beta$-SimPO & 26.6$^{\color{+}\text{+0.4}}$ & 28.4$^{\color{-}\text{-0.2}}$ & 21.6$^{\color{+}\text{+1.6}}$ & 25.5 (3) &
        \textbf{26.3}$^{\color{+}\text{+1.8}}$ & \textbf{27.9}$^{\color{+}\text{+1.8}}$ & 18.9$^{\color{+}\text{+0.2}}$ & 24.4 (2) \\ 
$\gamma$-SimPO & \textbf{28.2}$^{\color{+}\text{+2.0}}$ & 28.3$^{\color{-}\text{-0.3}}$ & \textbf{22.4}$^{\color{+}\text{+2.4}}$ & \textbf{26.3 (1)} &
        25.9$^{\color{+}\text{+1.4}}$ & 26.3$^{\color{+}\text{+0.2}}$ & \textbf{21.9}$^{\color{+}\text{+3.2}}$ & \textbf{24.7 (1)} \\
\bottomrule
\end{tabular}
}
\caption{The results from AlpacaEval2 and Arena-Hard illustrate the performance of various plug-and-play methods in a noisy label setting. In this context, ``LC'' stands for Length-Controlled, and ``WR'' stands for Raw Win Rates. Metrics that are highlighted in \textbf{bold} indicate the best performance among the plugins evaluated. The base model used for this analysis is Mistral-7B-Instruct. For each metric, a higher value indicates better performance.}
\label{tab:noise_result}
\end{table*}
\setlength{\tabcolsep}{2pt}

\subsection{Experiment Results}
\textbf{$\gamma$-PO consistently outperforms the baselines.}  
We apply our method to one of the leading baseline preference optimization approaches, SimPO, resulting in $\gamma$-SimPO. The results in Table \ref{tab:main_result_err} show that $\gamma$-SimPO consistently outperforms other methods in both AlpacaEval2 and Arena-Hard, demonstrating stable performance improvements. For instance, in AlpacaEval2, it boosts the performance of Qwen2.5-Instruct from 34.1\% to 39.0\% of LC, and from 33.3\% to 34.3\% of WR. In the Mistral-Instruct setting, $\gamma$-SimPO outperforms the baseline SimPO by 6.5\% (LC) and 7.1\% (WR) in AlpacaEval2, and enhances WR of Arena-Hard from 20.7\% to 22.9\%. On average, our method leads to a 4.4\% improvement. 
Additionally, we also evaluate our methods on MT-Bench, as shown in Appendix \ref{apd:mt_bench}. The results presented in Table~\ref{tab:mt_bench} demonstrate that our proposed methods, $\gamma$-DPO and $\gamma$-SimPO, consistently achieve superior performance compared to baseline methods across all models. 
These results highlight the effectiveness of our approach in better aligning LLMs with human preferences. 

\noindent\textbf{$\gamma$-PO outperforms other plug-and-play methods.}
We compared $\gamma$-PO with other plug-and-play techniques, such as rDPO and $\beta$-DPO. Specifically, we applied the $\beta$-DPO and rDPO to SimPO, resulting in $\beta$-SimPO and rSimPO, respectively. Additionally, we apply the $\gamma$-PO approach to both DPO and SimPO, leading to the development of $\gamma$-DPO and $\gamma$-SimPO. 
The experimental results in Table \ref{tab:plugin_result} show that our method significantly outperforms both rDPO and $\beta$-DPO in improving model performance.
For instance, in the Qwen2.5-Instruct scenario, $\gamma$-SimPO exceeds the baseline SimPO by 6.1 in AlpacaEval2 and 5.8 in Arena-Hard.
Moreover, both $\gamma$-DPO and $\gamma$-SimPO achieved the highest average scores across all metrics and settings, with just one exception. 
Overall, $\gamma$-PO demonstrates a more effective enhancement of alignment performance compared to other plug-and-play methods.

\noindent\textbf{$\gamma$-PO handles flipped data better.}
To evaluate the resilience of the $\gamma$-SimPO method in the presence of label uncertainty, we deliberately introduce label noise into the training dataset by flipping labels at random rates of 10\% and 20\%. 
Building on the optimal baseline provided by the SimPO method, we conducted comparative experiments using rSimPO, $\beta$-SimPO, and our proposed $\gamma$-SimPO, all utilizing the Mistral-7B-Instruct as the base model. 
As shown in Table \ref{tab:noise_result}, $\gamma$-SimPO outperforms the other methods, achieving the highest average score, which is further illustrated in Figure \ref{fig:average_bar}. These results demonstrate that $\gamma$-SimPO exhibits robustness in scenarios involving noise, such as label flipping.

\begin{table*}
  \centering  
  \begin{tabular}{l@{\hspace{1.5mm}}|@{\hspace{1.5mm}}c@{\hspace{2.5mm}}c@{\hspace{1.5mm}}|@{\hspace{1.5mm}}c@{\hspace{2.5mm}}c}
    \toprule
    Base Model & DPO & $\gamma$-DPO & SimPO & $\gamma$-SimPO \\
    \midrule
    Llama & 4A100*4h54min & 4A100*4h55min$^{\color{+}{\text{+1min}}}$ & 4A100*3h30min & 4A100*3h33min$^{\color{+}{\text{+3min}}}$ \\
    Mistral & 4A100*4h45min & 4A100*4h53min$^{\color{+}{\text{+8min}}}$ & 4A100*3h41min & 4A100*3h42min$^{\color{+}{\text{+1min}}}$ \\
    Gemma & 8A100*4h04min & 8A100*4h03min$^{\color{-}{\text{-1min}}}$ & 8A100*3h13min & 8A100*3h12min$^{\color{-}{\text{-1min}}}$ \\
    Qwen & 4A100*5h26min & 4A100*5h29min$^{\color{+}{\text{+3min}}}$ & 4A100*3h33min & 4A100*3h36min$^{\color{+}{\text{+3min}}}$ \\
    \midrule
    Avg. GPU time & 23.22h$\cdot$A100 & 23.38h$\cdot$A100$^{\color{+}{\text{+0.7\%}}}$ & 17.17h$\cdot$A100 & 17.25h$\cdot$A100$^{\color{+}{\text{+0.5\%}}}$ \\
    \bottomrule
  \end{tabular}
  \caption{A100 GPU time consumption for training DPO, $\gamma$-DPO, SimPO, and $\gamma$-SimPO across all base models. }
  \label{tab:GPU_time}
\end{table*}

\begin{figure}[t]
  \includegraphics[width=\columnwidth]{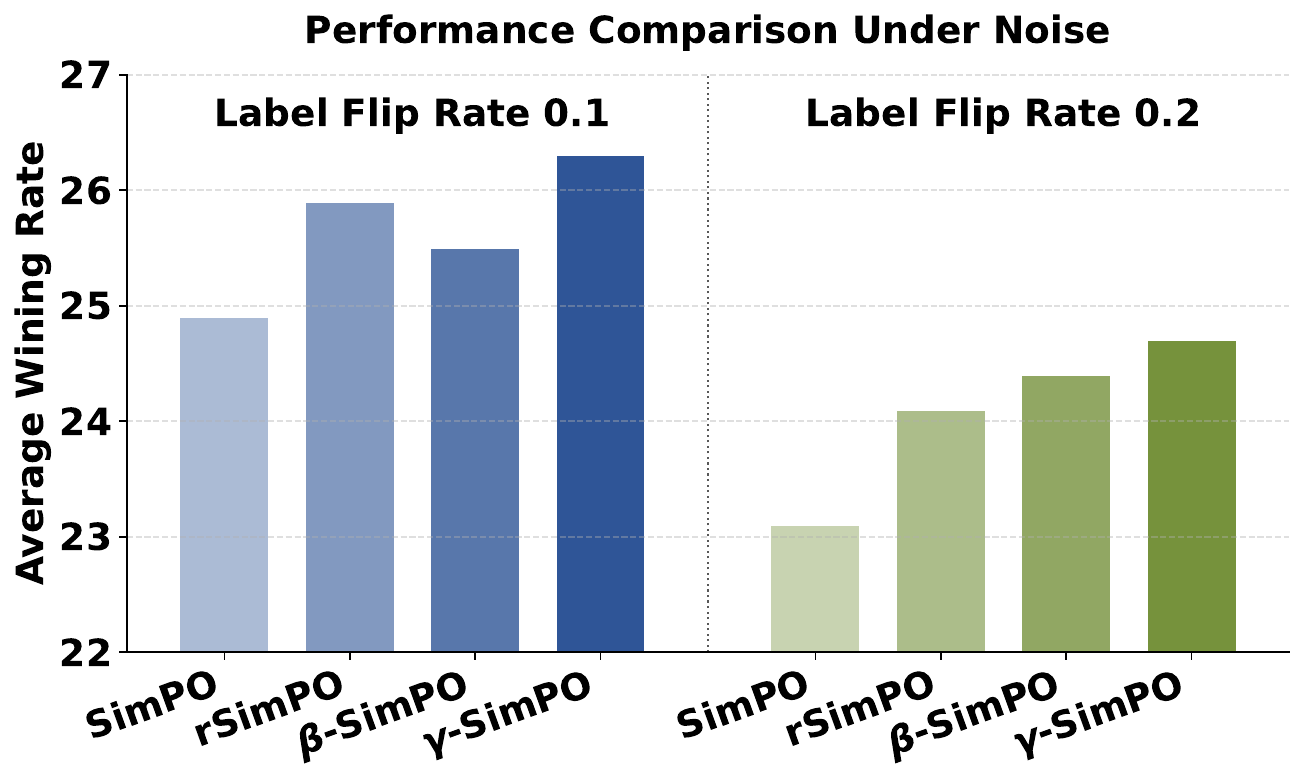}
  \caption{Comparison of average winning rates with randomly flipped labels at different probabilities.}
  \label{fig:average_bar}
\end{figure}

\noindent\textbf{$\gamma$-PO requires no additional computing overhead.}
To evaluate the computational efficiency of $\gamma$-PO, we measure the training duration when combining $\gamma$-PO with DPO and SimPO across four base models. As shown in Table \ref{tab:GPU_time}, $\gamma$-PO maintains computational efficiency comparable to the baseline methods while also delivering performance gains. Specifically, $\gamma$-DPO has only a 0.7\% longer training time than DPO, while $\gamma$-SimPO demonstrates an even more minor increase of 0.5\% over SimPO. These results confirm that our method performs better without significantly increasing GPU resource consumption.

\subsection{Visualization of Dynamic Target Margin}
Figure \ref{fig:visualize-gamma} depicts the relationship between the target margin $\gamma_i$ and the reward gap for various LLMs (Llama, Mistral, Gemma, and Qwen, which stand for Llama3-Instruct (8B), Mistral-Instruct (7B), Gemma2-Instruct (9B), and Qwen2.5-Instruct (7B), respectively).
The target margin exhibits a sigmoid-like increase with the reward gap, with the rapid upward slope closely corresponding to the range of the dynamic target margin.
This behavior suggests that the target margin is increased when the reward gap exceeds the threshold, prioritizing learning from high-gap pairs. In contrast, when the reward gap is smaller than the threshold, corresponding to low-gap pairs that may contain noise, the target margin is reduced to prevent overfitting to these pairs.
Furthermore, we observed that the target margin does not exhibit extreme values despite large or small reward gaps; it varies within a controlled, reasonable range. 
This visualization confirms the effectiveness of our approach in dynamically adjusting the target margin based on the reward gap, illustrating how it can be tailored to enhance learning from high-gap pairs while minimizing the influence of unreliable pairs.

\begin{figure*}[t]
  \includegraphics[width=1\textwidth]{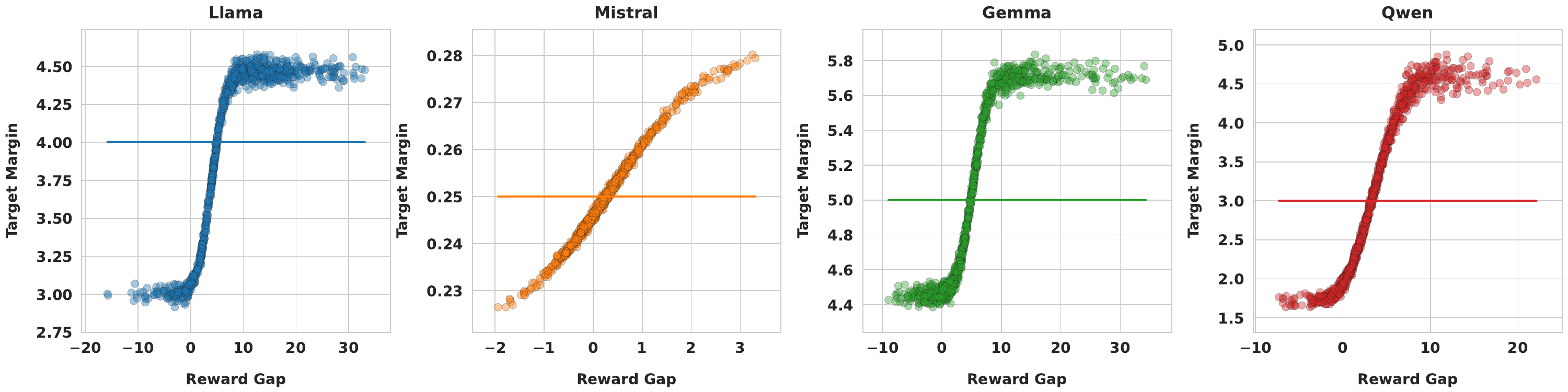}
  \caption{Visualization of dynamic target margin ($\gamma_i$) with reward gaps. The horizontal line indicates the initial value of target margin ($\gamma_0$).}
  \label{fig:visualize-gamma}
\end{figure*}

\subsection{Hyperparameter}

\begin{figure}[t]
  \includegraphics[width=\columnwidth]{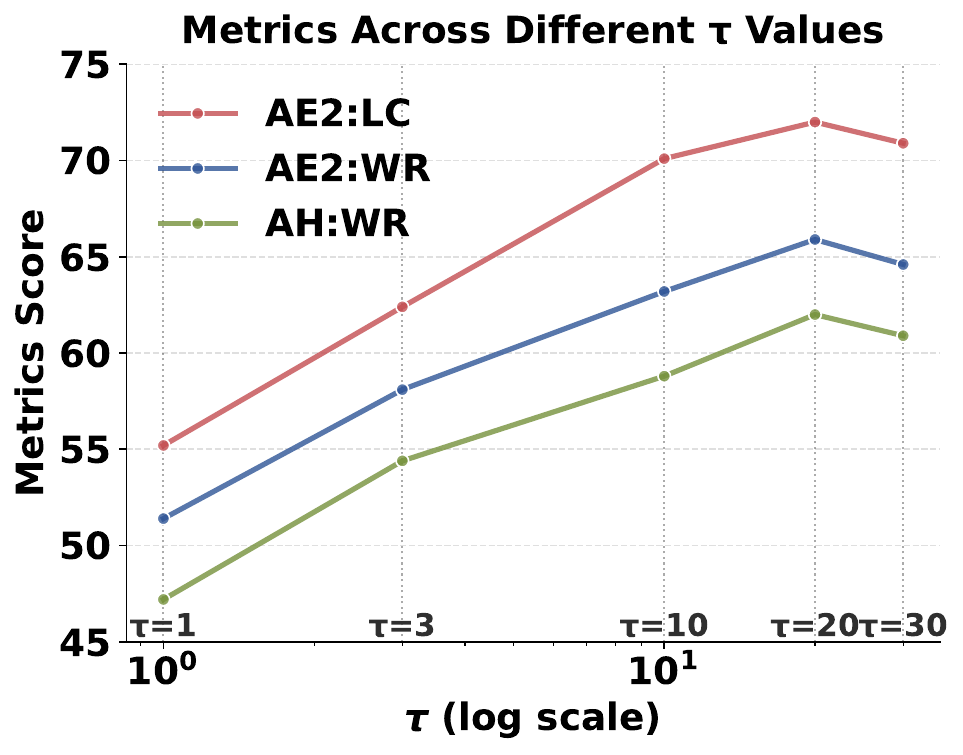}
  \caption{Performance metrics across different $\tau$ settings for the Gemma model, showing Length-Controlled Winning Rate (AE2:LC) and Raw Winning Rate (AE2:WR) from AlpacaEval2, and Raw Winning Rate (AH:WR) from Arena Hard.}
  \label{fig:hyperparameter}
\end{figure}

We thoroughly investigate how the hyperparameter $\tau$ in $\gamma$-PO affects the performance of AlpacaEval2 and Arena-Hard. Figure \ref{fig:hyperparameter} illustrates the performance metrics across different $\tau$ values for the Gemma base model. We plot three key metrics: Length-Controlled Winning Rate (AE2:LC) and Raw Winning Rate (AE2:WR) from AlpacaEval2, and Raw Winning Rate from Arena Hard (AH:WR). The results demonstrate a clear upward trend for all three metrics as $\tau$ increases when $\tau < 20$, followed by a decline when $\tau > 20$. This suggests that the optimal value of $\tau$ is almost the same across all metrics. 
The range for hyperparameter search is detailed in Table \ref{tab:hyperparams_baseline}, while the final optimal hyperparameters are presented in Table~\ref{tab:hyperparams_xpo}.
\section{Related Work}
\subsection{Reinforcement Learning from Human Feedback}
Reinforcement Learning from Human Feedback (RLHF) \cite{christiano2017deep, ziegler2019fine, azar2024general} is a crucial post-training step following Supervised Fine-Tuning (SFT). 
RLHF helps align LLMs outputs with human values such as helpfulness, harmlessness, and honesty \cite{ouyang2022training, bai2022training}. 
It has been widely applied in the revolutionary closed-source LLMs, such as GPT-4 \cite{openai2024gpt4technicalreport} and Gemini \cite{team2023gemini}, as well as powerful open-source models like LLaMA-3 \cite{grattafiori2024llama3}, Gemma-2 \cite{gemmateam2024gemma2}, and Qwen-2.5 \cite{qwen2025qwen25}.
The RLHF process generally consists of three main steps.
First, a policy $\pi_\text{SFT}$ is trained on a large dataset through SFT~\cite{zhou2024lima-sft, ding2023enhancing-sft, kopf2024openassistant-sft}. Next, $\pi_\text{SFT}$ generates multiple responses, which are then labeled by humans into preference pairs. These pairs are used to train a reward model, $r_\theta$, that scores the model’s responses based on their alignment with human preferences \cite{chen2024odin-reward, gao2023scaling-reward, lightman2023let-reward, havrilla2024glore-reward, luo2023wizardmath-reward, lambert2024rewardbench-reward, dong2024rlhfworkflowrewardmodeling}. Lastly,
the reward model updates the policy model via PPO algorithm~\cite{schulman2017PPO} in RL \cite{anthony2017thinking-ppo}.

\subsection{Direct Preference Optimization}
Recent studies introduce several enhancements to DPO, including IPO~\cite{azar2023IPO}, KTO~\cite{ethayarajh2024KTO}, CPO~\cite{xu2024CPO}, ORPO~\cite{hong2024ORPO}, R-DPO~\cite{park2024R-DPO}, $\beta$-DPO~\cite{wu2024beta_DPO}, SimPO~\cite{meng2024simpo}, $\alpha$-DPO~\cite{wu2024alphadpo}, WPO~\cite{zhou2024wpo}, and
RainbowPO~\cite{zhao2024rainbowpo}.
In addition, Dr. DPO~\cite{drdpo} uses Distributionally Robust Optimization to handle noisy data and align human preferences.
\citet{reward_gap} introduce the alignment potential metric $M_{AP}$ to optimize preference data selection for LLM alignment.

\section{Conclusion}

In this work, we introduce a plug-and-play approach called $\gamma$-PO, which is applicable to DPO and its variants. 
The key idea behind $\gamma$-PO is personalizing the target margin based on the reward margin, increasing them for high-confidence pairs to prioritize learning from these instances while reducing them for low-reward margin pairs to mitigate the impact of uncertain preference, which improves the handling of ambiguous preference data and reduces the risk of noisy data distorting the alignment policy. 
Furthermore, $\gamma$-PO offers a simple and efficient solution with minimal impact on training efficiency, making it a valuable tool for improving alignment in practical applications.
Extensive experiments demonstrate that the $\gamma$-PO enhances the alignment performance, achieving robust results on AlpacaEval2 and Arena-Hard. 
Our findings highlight the potential of dynamic margin strategies as a promising direction for improving the alignment of LLMs, particularly in the presence of noisy data. There is still substantial opportunity for deeper exploration of this approach to enhance model alignment in real-world settings.

\section*{Limitations}

Although the $\gamma$-PO method demonstrates significant improvements in LLMs alignment, it introduces several limitations that warrant further investigation and refinement in future work.

\noindent\textbf{Additional Hyperparameter.}
The $\gamma$-PO method, while enhancing performance, introduces an additional hyperparameter, $\tau$, which significantly impacts model alignment. Future efforts will focus on eliminating this parameter, aiming to compute the adaptive margin automatically using existing parameters.

\noindent\textbf{Limited Evaluation.}
In our experiments, we used two evaluation benchmarks: AlpacaEval2 \cite{li2023alpacaeval} and Arena-Hard \cite{arenahard2024_ArenaHard}. Although these benchmarks are widely used in the field \cite{wu2024alphadpo, meng2024simpo, wu2024beta_DPO}, we observed that their results are influenced by the choice of parameters (\textit{e.g.}, the version of `alpaca\_eval'). In the future, incorporating a broader range of LLMs evaluation metrics and developing benchmarks that are more robust to parameter variations could provide a more comprehensive assessment of model performance.

\section{Ethical Considerations}
While the UltraFeedback dataset \cite{cui2024ultrafeedback} provides high-quality preference and textual feedback, covering aspects such as instruction following, truthfulness, honesty, and helpfulness, it still faces challenges in addressing the safety of LLMs. This includes mitigating risks such as toxicity, stereotype bias, adversarial and out-of-distribution robustness, privacy, machine ethics, and fairness \cite{wang2024decodingtrust}. Future alignment efforts will require the integration of larger and more diverse training datasets to comprehensively address these issues \cite{ji2024beavertails, zhao2024wildchat}.

\section*{ACKNOWLEDGEMENT}
This work was supported by Ant Group Research Fund.

\bibliography{main}
\newpage
\appendix

\section{Proof of Theorem \ref{thm:Approximation}}\label{apd:poofthm}
\setcounter{theorem}{0}
\renewcommand{\thetheorem}{3.\arabic{theorem}}
\begin{theorem}
    \label{thm:Approximation}
    Consider the rDPO loss function defined as
    \begin{equation}
    \begin{aligned}
        \mathcal{L}_\text{rDPO} = -\varepsilon &\log\sigma(r_l - r_w - \gamma_0) \\
        - (1 - \varepsilon) &\log\sigma(r_w - r_l - \gamma_0),
    \end{aligned}
    \end{equation}
    where $\sigma(x) = \frac{1}{1 + e^{-x}}$ is the sigmoid function, and $\varepsilon$ is a smoothing parameter. 

    If we set $\mathcal{L}_\text{rDPO} = \mathcal{L}_\text{$\gamma$-PO}$ for simplicity, and denote $\delta = \gamma_i - \gamma_0$ and $m = r_w - r_l$, then under the condition $|\delta| \ll |m|$, the approximate solution for $\varepsilon$ is given by
    \begin{equation}
    \varepsilon \approx \frac{\delta \left[ 1 - \sigma(m - \gamma_0) \right]}{\log\left(1 + e^{m + \gamma_0}\right) - \log\left(1 + e^{\gamma_0-m}\right)}.
    \end{equation}
\end{theorem}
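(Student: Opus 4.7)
The plan is to reduce the identity $\mathcal{L}_\text{rDPO} = \mathcal{L}_{\gamma\text{-PO}}$ to a scalar equation in $\varepsilon$ and $\delta$, and then linearize in $\delta$ using the assumption $|\delta| \ll |m|$. With the notation $m = r_w - r_l$ and $\delta = \gamma_i - \gamma_0$, the $\gamma$-PO loss reads $\mathcal{L}_{\gamma\text{-PO}} = -\log\sigma(m - \gamma_0 - \delta)$, and the rDPO loss is linear in $\varepsilon$: $\mathcal{L}_\text{rDPO} = -\log\sigma(m-\gamma_0) + \varepsilon\bigl[\log\sigma(m-\gamma_0) - \log\sigma(-m-\gamma_0)\bigr]$. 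Setting the two equal isolates $\varepsilon$ as an exact quotient; the approximation enters only through a first-order Taylor expansion of the right-hand side in $\delta$.

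For the Taylor step I would use the identity $\tfrac{d}{dx}[-\log\sigma(x)] = \sigma(x) - 1$, which follows from $\sigma'(x) = \sigma(x)(1-\sigma(x))$. Applying the chain rule with $u(\delta) = m - \gamma_0 - \delta$ gives $\tfrac{d}{d\delta}[-\log\sigma(u)] \big|_{\delta=0} = 1 - \sigma(m-\gamma_0)$, so
\begin{equation}
-\log\sigma(m - \gamma_0 - \delta) \;=\; -\log\sigma(m-\gamma_0) \;+\; \delta\bigl[1 - \sigma(m-\gamma_0)\bigr] \;+\; O(\delta^2).
\end{equation}
Substituting this into $\mathcal{L}_\text{rDPO} = \mathcal{L}_{\gamma\text{-PO}}$, the common $-\log\sigma(m-\gamma_0)$ term cancels and one solves directly:
\begin{equation}
\varepsilon \;\approx\; \frac{\delta\bigl[1 - \sigma(m-\gamma_0)\bigr]}{\log\sigma(m-\gamma_0) - \log\sigma(-m-\gamma_0)}.
\end{equation}

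The final step converts the denominator into the softplus form stated in the theorem via the identity $\log\sigma(x) = -\log(1+e^{-x})$. This yields $\log\sigma(m-\gamma_0) = -\log(1+e^{\gamma_0 - m})$ and $\log\sigma(-m-\gamma_0) = -\log(1+e^{m+\gamma_0})$, so the denominator becomes $\log(1+e^{m+\gamma_0}) - \log(1+e^{\gamma_0 - m})$, matching the claim exactly.

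The calculation is essentially routine once the linearization is made, so I do not expect a real obstacle; the main places to be careful are (i) getting the sign right in $\tfrac{d}{d\delta}[-\log\sigma(m-\gamma_0-\delta)]$, where the chain-rule factor of $-1$ from $du/d\delta$ combines with $\sigma(u)-1$ to give the positive factor $1-\sigma(m-\gamma_0)$, and (ii) correctly translating between the $\log\sigma$ and softplus forms so that the final denominator comes out in the order $\log(1+e^{m+\gamma_0}) - \log(1+e^{\gamma_0-m})$ rather than its negative. Beyond this bookkeeping, the proof is a single Taylor expansion followed by algebraic rearrangement.
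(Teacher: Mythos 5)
Your proposal is correct and follows essentially the same route as the paper's own proof: both reduce $\mathcal{L}_\text{rDPO}=\mathcal{L}_{\gamma\text{-PO}}$ to a scalar identity, linearize the $\gamma$-PO side to first order in $\delta$ (the paper expands $\log(1+e^{-(m-\gamma_0-\delta)})$, you expand $-\log\sigma(m-\gamma_0-\delta)$ — the same function), cancel the common zeroth-order term, and identify the resulting coefficient as $1-\sigma(m-\gamma_0)$ before rewriting the denominator in softplus form. The only difference is the order in which the $\log\sigma$-to-softplus conversion is performed, which is purely cosmetic.
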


\begin{proof}
    Starting with the given equation:
    \begin{equation}
    \begin{aligned}
        &-\log\sigma(m - \gamma_0 - \delta) \\
        &= -\varepsilon\log\sigma(-m - \gamma_0) -(1 - \varepsilon)\log\sigma(m - \gamma_0),
    \end{aligned}
    \end{equation}
    using the definition of the sigmoid function $\sigma(x) = \frac{1}{1 + e^{-x}}$ and $-\log\frac{1}{x}=\log x$, we rewrite the equation as:
    \begin{equation}
    \begin{aligned}
        \log\left(1 + e^{-(m - \gamma_0 - \delta)}\right) = \varepsilon & \log\left(1 + e^{-(-m - \gamma_0)}\right) \\
        + (1 - \varepsilon) & \log\left(1 + e^{-(m - \gamma_0)}\right).
    \end{aligned}
    \end{equation}
    Assuming that $\delta$ is small, we apply a first-order Taylor expansion around $\delta = 0$ to approximate the left-hand side:
    \begin{equation}
    \begin{aligned}
        &\log\left(1 + e^{-(m - \gamma_0)}\right) + \delta \cdot \frac{e^{-(m - \gamma_0)}}{1 + e^{-(m - \gamma_0)}} \\
        \approx &(1 - \varepsilon) \log\left(1 + e^{\gamma_0-m}\right) + \varepsilon \log\left(1 + e^{m + \gamma_0}\right).
    \end{aligned}
    \end{equation}
    Rearranging the terms to solve for $\delta$, we get:
    \begin{equation}
    \begin{aligned}
        &\delta \cdot \frac{e^{\gamma_0-m}}{1 + e^{\gamma_0-m}} \\
        \approx &\varepsilon \left[ \log\left(1 + e^{m + \gamma_0}\right) - \log\left(1 + e^{\gamma_0-m)}\right) \right].
    \end{aligned}
    \end{equation}
    Recognizing that $\frac{e^{\gamma_0-m}}{1 + e^{\gamma_0-m}} = \sigma(\gamma_0-m) = 1 - \sigma(m - \gamma_0)$, we express $\delta$ as:
    \begin{equation}
    \begin{aligned}
        \varepsilon \approx \frac{\delta \left[ 1 - \sigma(m - \gamma_0) \right]}{\log\left(1 + e^{m + \gamma_0}\right) - \log\left(1 + e^{\gamma_0-m}\right)}.
    \end{aligned}
    \end{equation}
\end{proof}

In addition, as training approaches convergence, as shown in Figure \ref{fig:visualize-gamma}, most of the values of $\gamma_i$ lie near $\gamma_0$, while $\delta$ remains small under the control of $\tau$, thereby satisfying the assumption $|\delta| \ll |m|$.

\section{Experiment Details}\label{apd:expdetail}

\paragraph{Training hyperparameters.}
\begin{table*}[!ht]
    \centering
    \resizebox{\textwidth}{!}{
    \begin{tabular}{lll}
    \toprule 
    \textbf{Method} & \textbf{Objective} & \textbf{Hyperparameter} \\ \midrule
    DPO~\cite{rafailov2024DPO} & $-\log \sigma \left( \beta \log \frac{\pi_\theta(y_w|x)}{\pi_{\text{ref}}(y_w|x)} - \beta \log \frac{\pi_\theta(y_l|x)}{\pi_{\text{ref}}(y_l|x)}\right)$ & $\beta \in [0.01, 0.05, 0.1]$ \\ \midrule 
    IPO~\cite{azar2023IPO} & $ \left( \log \frac{\pi_\theta(y_w|x)}{\pi_{\text{ref}}(y_w|x)} - \log \frac{\pi_\theta(y_l|x)}{\pi_{\text{ref}}(y_l|x)} - \frac{1}{2\tau} \right)^2$ & $\tau \in [0.01, 0.1, 0.5, 1.0]$ \\  \midrule 
    CPO~\cite{xu2024CPO} &  $-\log \sigma  \left(\beta \log \pi_\theta(y_w|x) - \beta \log \pi_\theta(y_l|x) \right) - \lambda \log \pi_\theta (y_w|x)$ & $\alpha = 1.0, \,\, \beta \in [0.01, 0.05, 0.1]$ \\ \midrule
    \multirow{2}{*}{KTO~\cite{ethayarajh2024KTO}} & $-\lambda_w \sigma \left( \beta \log \frac{\pi_\theta(y_w|x)}{\pi_{\text{ref}}(y_w|x)} - z_{\text{ref}} \right) +  \lambda_l \sigma \left( z_{\text{ref}} - \beta \log \frac{\pi_\theta(y_l|x)}{\pi_{\text{ref}}(y_l|x)} \right),\,$ & $\lambda_l = \lambda_w = 1.0$ \\  
    & $\text{where} \,\, z_{\text{ref}} = \mathbb{E}_{(x, y) \sim \mathcal{D}} \left[\beta \text{KL}\left( \pi_\theta(y|x) || \pi_{\text{ref}}(y|x) \right)  \right]$ & $\beta \in [0.01, 0.05, 0.1]$ \\ \midrule
    \multirow{2}{*}{ORPO~\cite{hong2024ORPO}} & $-\log p_\theta(y_w|x) - \lambda  \log \sigma \left(\log \frac{p_\theta(y_w|x)}{1 - p_\theta(y_w|x)} - \log \frac{p_\theta(y_l|x)}{1 - p_\theta(y_l|x)}  \right),\,$ & \multirow{2}{*}{$\lambda \in [0.1, 0.5, 1.0, 2.0]$} \\  
    & $\text{where} \,\, p_\theta(y|x) = \exp\left( \frac{1}{|y|} \log \pi_\theta(y|x) \right)$ \\  \midrule
    \multirow{2}{*}{R-DPO~\cite{park2024R-DPO}} & \multirow{2}{*}{$-\log \sigma \left( \beta \log \frac{\pi_\theta(y_w|x)}{\pi_{\text{ref}}(y_w|x)} - \beta \log \frac{\pi_\theta(y_l|x)}{\pi_{\text{ref}}(y_l|x)} - \left(\alpha |y_w| - \alpha |y_l| \right) \right)$} & $\alpha \in [0.05, 0.1, 0.5, 1.0]$ \\
    & & $\beta \in [0.01, 0.05, 0.1]$ \\
    \midrule 
    \multirow{2}{*}{rDPO~\cite{chowdhury2024rDPO}} & $-(1-\varepsilon)\log \sigma  \left(\text{rg} \right) - \varepsilon\log \sigma  \left(-\text{rg} \right)$ & $\beta \in [0.01, 0.05, 0.1]$ \\
    &$\text{where} \,\, \text{rg}=\beta\log\frac{\pi_\theta(y_w|x)}{\pi_\text{ref}(y_w|x)} - \beta\log\frac{\pi_\theta(y_l|x)}{\pi_\text{ref}(y_l|x)}$ & $\varepsilon \in [0.01, 0.05, 0.1]$ \\
    \midrule 
    \multirow{2}{*}{SimPO~\cite{meng2024simpo}} & \multirow{2}{*}{$-\log \sigma  \left( \frac{\beta}{|y_w|} \log \pi_\theta(y_w|x) - \frac{\beta}{|y_l|} \log \pi_\theta(y_l|x) - \gamma \right)$} & $\beta \in [2.5, 5.0, 10.0]$ \\
    & & $\gamma \in [0.3, 0.4, 0.5, 1.0]$ \\
    \midrule 
    \multirow{2}{*}{$\beta$-DPO~\cite{wu2024beta_DPO}} & {$-\log \sigma \left( \beta_i \log \frac{\pi_\theta(y_w|x)}{\pi_{\text{ref}}(y_w|x)} - \beta_i \log \frac{\pi_\theta(y_l|x)}{\pi_{\text{ref}}(y_l|x)}\right)$} & $\beta\in [2.5,10.0], \,\, \gamma \in [0.1, 0.3, 0.5]$  \\
    & $\text{where} \,\, \beta_i=\beta_0+\alpha(M_i-M_0)\beta_0, \,\, M=\beta_0 \log \frac{\pi_\theta(y_w|x)}{\pi_{\text{ref}}(y_w|x)} - \beta_0 \log \frac{\pi_\theta(y_l|x)}{\pi_{\text{ref}}(y_l|x)}$ & $\alpha \in [1e-2, 5e-2, 0.1, 0.2]$ \\
    \bottomrule
    \end{tabular}
    }
\caption{Various preference optimization objectives and hyperparameter search range.}
\label{tab:hyperparams_baseline}
\end{table*}
We observed that the alignment performance of LLMs is influenced by the choice of hyperparameters. To ensure a fair comparison, we conducted a hyperparameter search for all methods, with the search range detailed in Table \ref{tab:hyperparams_baseline}.
For the remaining hyperparameters, the learning rate was selected from the range [3e-7, 5e-7, 6e-7, 8e-7, 1e-6]. A consistent batch size of 128 was used across all methods. All models were trained for one epoch using a cosine learning rate schedule, which included a 10\% warm-up phase. The Adam optimizer was employed \citep{kingma2017adam}. Additionally, the maximum sequence length was set to 2048.

\noindent\textbf{Hyperparameter in $\gamma$-PO.}
Table \ref{tab:hyperparams_xpo} shows the hyperparameters of our method.
\begin{table}[h!]
\centering
\begin{tabular}{lcccc}
\toprule 
\textbf{Setting} & $\beta$ & $\gamma$ & $\tau$ & Learning rate \\ \midrule
Llama3-Instruct & 10.0 & 0.4 & 10.0 & 1e-6 \\
Mistral-Instruct & 2.5 & 0.15 & 10.0 & 6e-7 \\
Gemma2-Instruct & 10.0 & 0.5 & 20.0 & 8e-7 \\
Qwen2.5-Instruct & 10.0 & 0.3 & 3.0 & 1e-6 \\
\bottomrule
\end{tabular}
\caption{The hyperparameter values in $\gamma$-PO used for each training setting.}
\label{tab:hyperparams_xpo}
\end{table}

\noindent\textbf{Decoding hyperparameters.}
The decoding hyperparameters are consistent with those used in SimPO\footnote{\href{https://github.com/princeton-nlp/SimPO/tree/main/eval}{https://github.com/princeton-nlp/SimPO/tree/main/eval}}. We would like to express our sincere gratitude to the SimPO team for generously sharing their valuable insights.

\noindent\textbf{Computation environment.}
All training experiments in this paper were conducted using 8×A100 GPUs, following the procedures outlined in the alignment-handbook repository\footnote{\href{https://github.com/huggingface/alignment-handbook}{https://github.com/huggingface/alignment-handbook}}.

\section{Algorithm for calculating $\gamma$}
\label{apd:calculate-gamma}

Due to GPU memory constraints, the batch size is relatively small, which may introduce sampling bias, leading to significant variations in reward margins across different batches. To mitigate this issue, we introduce an additional queue to reduce bias.
Specifically, we implement a First-In-First-Out (FIFO) reward margin queue, denoted as $\mathcal{H}$. This mechanism maintains an updated record of recent reward margins, and incorporating more samples from $\mathcal{H}$ helps better approximate the true distribution of reward margins.
The revised pseudocode is provided in Algorithm \ref{apd:pseudocode}.

\begin{algorithm}[t]
\caption{$\gamma$-PO Algorithm}\label{apd:pseudocode}
\begin{algorithmic}[1] 
\State \textbf{Input:} SFT model $\pi_\text{SFT}$, dataset $\mathcal{D}$, total number of iterations $T$, learning rate $\alpha$,
\State Initial policy model $\pi_\theta=\pi_\text{SFT}$, empty FIFO queue $\mathcal{H}$,
\For{$t=0$ to $T$}
    \State Sample a batch of tuples $(x, y_w, y_l)$ from $\mathcal{D}$.
    \State Sample a reward pair $(r_w, r_l)$ from $\mathcal{H}$.
    \State Calculate the rewards pair $(r_w, r_l)$ via Table \ref{tab:marginPO} and store them in $\mathcal{H}$.
    \State Compute the adaptive margin $\gamma_i$ using Equation \eqref{equ:loss-gamma2} with reward pairs $(r_w, r_l)$ both from $\mathcal{H}$  current batch.
    \State Extract the adaptive margin $\gamma_i$ from the current batch, calculate the loss $\mathcal{L}_{\gamma\text{-PO}}$ using Equation \eqref{equ:loss},
    \State Update the policy parameters $\theta$ for $\pi_\theta$ via gradient descent: $\theta_{t+1} = \theta_t - \alpha \nabla_{\theta_t} \mathcal{L}_{\gamma\text{-PO}}$.
\EndFor
\end{algorithmic}
\end{algorithm}

For optimizing $\gamma_i$, we apply \href{https://nisheethvishnoi.wordpress.com/wp-content/uploads/2018/05/lecture42.pdf}{Mirror Descent and the Multiplicative Weight Update Method} to efficiently compute $\gamma$. The optimization problem for $\gamma$ is formulated as follows:
\begin{equation}
\begin{aligned}
    p =\arg \min_p & (-\mathbb{E}_\mathcal{D}[\log \sigma(m - \gamma_i) \\
    & - \tau \, \mathbb{D}_\text{KL}(p||p_0))],
\end{aligned}
\end{equation}
where $m$ represents the reward margins, $\gamma_i=m_n p_i \gamma_0$, $\sigma(x)$ is the sigmoid function, $p$ is the probability distribution over the reward gaps, $\gamma_0$ is the initial margin value, $p_0$ is the uniform distribution (used as the target), and $\tau$ is a regularization parameter controlling the KL divergence term. 
For simplicity, We denote $L=-\mathbb{E}_\mathcal{D}[\log \sigma(m - \gamma_i) + \tau \, \mathbb{D}_\text{KL}(p||p_0)]$.
The gradients of the loss with respect to $p$ are computed as:
\begin{equation}
\begin{aligned}
    \frac{\partial L}{\partial p_i} = \frac{\gamma_0}{1 + e^{m - m_n p_i \gamma_0}} + \tau\left(1 + \log(m_n p_i)\right),
\end{aligned}
\end{equation}
where $m_n$ is the number of reward margin $m$. These gradients are then used to iteratively update \(p\) using a multiplicative update rule:
\begin{equation}
    p_i \leftarrow p_i \cdot \frac{\exp(-\eta \cdot \text{grad}(p_i))}{\sum_i \exp(-\eta \cdot \text{grad}(p_i))},
\end{equation}
where $\eta$ is the learning rate.
Furthermore, we derive the second-order gradients with respect to  p  as:
\begin{equation}
\begin{aligned}
    \frac{\partial^2 L}{\partial p_i^2} &= \frac{m_n \gamma_0^2 e^{m - m_n p_i \gamma_0}}{\left( 1 + e^{m - m_n p_i \gamma_0} \right)^2} + \frac{\tau}{p_i} > 0,
\end{aligned}
\end{equation}
which indicates that it is a convex optimization problem and that an optimal solution for  p  exists.

In our implementation, we set $\eta = \min\left(\frac{0.5}{\tau}, 1\right)$ and $m_n = 256$. The optimization process is repeated $T=20$ times, which allows for sufficient convergence.

\section{Outputs Example}
\begin{table*}[ht]
\centering
\begin{tabular}{@{}p{0.18\linewidth}p{0.82\linewidth}@{}}
\toprule
Instruction: & Hi, I'm trying to solve a crossword puzzle, but I've never done one of these before. Can you help me out? \\
\midrule
DPO output: & I'd love to help you with your crossword puzzle!  Here's how we can break it down to make it easier:\textbackslash n\textbackslash n**1.  Tell me the specifics:**\textbackslash n\textbackslash n* **Clues:**  Share the  **across** and **down** clues you're stuck on.  Be as specific as possible about the number of letters each answer should have (e.g., \textbackslash"7 letters across\textbackslash").\textbackslash n* **Theme (if any):**  Some crosswords have a theme. Knowing it can give big hints.\textbackslash n* **Intersection Help:** If you have some letters already filled in,  let me know where they intersect with the clues you need help with. This can narrow down possibilities.\textbackslash n\textbackslash n**2.  I can offer:**\textbackslash n\textbackslash n* **Definitions and synonyms:**  I can provide definitions or related words to help you decipher clues.\textbackslash n* **Wordplay hints:** Crosswords often use puns, anagrams, or other wordplay. I can help spot those.\textbackslash n* **Letter suggestions:**  If you have partial answers, I might be able to suggest missing letters.\textbackslash n\textbackslash n\textbackslash nLet's get started!  What's your first clue? \\
\midrule
$\gamma$-SimPO output: & Absolutely! I can definitely help you with your crossword puzzle.  To give you the best assistance, I need a bit more information:* **Can you provide the clues, especially the ones you're stuck on?**  I'm good at understanding wordplay, synonyms, and different clue types.* **What's the crossword's theme (if any)?**  Themes often give hints about interconnected clues.* **Are there any intersecting letters you already have filled in?**  Those can be crucial for narrowing down possibilities. Once you provide these, I'll do my best to help you crack the clues and solve the puzzle! \\
\midrule
GPT-4-Turbor(1106) output: & Absolutely! I'd be delighted to help you with your crossword puzzle. Just tell me the clues and the number of letters needed for each answer (and any letters you may have already filled in), and I'll do my best to help you find the solutions. If you have any specific questions about how to approach solving crossword puzzles in general, feel free to ask those as well! \\
\bottomrule
\end{tabular}
\caption{Output example from Gemma base model.}
\label{tab:output_example}
\end{table*}
We present sample outputs generated by the Gemma base model, as shown in Table \ref{tab:output_example}.


\begin{table*}[!ht]
    \centering
    \resizebox{\textwidth}{!}{
    \small
    \begin{tabular}{lcl}
    \toprule
    \textbf{Name} & \textbf{Notation} & \multicolumn{1}{c}{\textbf{Description}} \\ \midrule
    Prompt & $x$ & Prompt is the input sequence passed to the model. \\ 
    Response & $y$ & Response is the output sequence of the model. \\ 
    Preferred Response & $y_w$ & The sample has a higher reward score or is preferred by humans. \\ 
    Non-preferred Response & $y_l$ & The sample has a lower reward score or non-prefered by humans. \\ 
    \midrule
    Reward of Preferred Response & $r_w$ & Reward score of $y_w$. \\  
    Reward of Less Preferred Response & $r_l$ & Reward score of $y_l$. \\  
    Reward Margin & $m$ & Reward margin between $r_w$ and $r_l$, \ie $m = r_w - r_l$. \\  
    Margin Batchsize & $m_n$ & The number of preference pairs in one batch. \\  
    Initial Target Margin & $\gamma_0$ & The initial target margin for preference learning. \\
    Adaptive Target Margin & $\gamma_i$ & The adaptive target margin for preference learning. \\ 
    Uniform-distribution & $p_0$ & Uniform distribution of $\gamma_0$: $p_i = 1 / m_n$. \\ 
    $\gamma$-distribution & $p_i$ & The normalized distribution of $\gamma_i$: $p_i = \gamma_i^* / \sum_j \gamma_j^*, \; \text{where} \; \gamma_i^* = \max(0, \gamma_i)$. \\ 
    \midrule
    Policy Model & $\pi_\theta$ & The generative model accepts prompts and outputs response. \\ 
    Reference Model & $\pi_\text{ref}$ & The generative model is employed as a reference to ensure \\
    & & minimal deviation from the policy model. \\ 
    \midrule
    Preference Dataset & $\mathcal{D}$ & Dataset comprising a set of triples $(x,y_w,y_l)$. \\
    Loss Function & $\mathcal{L}$ & Loss function. \\
    Historical Reward Gap Queue & $\mathcal{H}$ & The First-In-First-Out queue to maintain the recent reward gaps. \\
    \midrule
    Smoothing Parameter & $\varepsilon$ & The initial margin for preference learning, also considered as the \\
    & & label flip rate. \\
    Dynamic Smoothing Parameter & $\varepsilon_i$ & The smoothing parameter that adaptive to different samples. \\ 
    Regularization Hyper-parameters & $\tau$ & Regularization Hyper-parameters control the deviation from \\
    & & initial margin. \\
    \bottomrule
    \end{tabular}
}
\caption{Table of Terminology and Notation.}
\label{tab:notations}
\end{table*}


\section{The artifacts}
\textbf{Base models.} In this manuscript, we use several publicly available resources, including four base models (LLaMA-3-8B-Instruct \cite{grattafiori2024llama3}, Mistral-7B-Instruct \cite{jiang2023mistral7b}, Gemma-2-9B-Instruct \cite{gemmateam2024gemma2}, and Qwen-2.5-7B-Instruct \cite{qwen2025qwen25}), which are licensed under their respective terms for research and non-commercial use. Additionally, we utilize the Ultrafeedback Binarized dataset\footnote{\href{https://huggingface.co/datasets/HuggingFaceH4/ultrafeedback_binarized}{HuggingFaceH4/ultrafeedback\_binarized}}, which is publicly available under the CC-BY-4.0 license for academic and research purposes. The reward model, RLHFlow/ArmoRM-Llama3-8B-v0.1 \footnote{\href{https://huggingface.co/RLHFlow/ArmoRM-Llama3-8B-v0.1}{RLHFlow/ArmoRM-Llama3-8B-v0.1}}, is also publicly available on Hugging Face with terms permitting non-commercial use.

\noindent\textbf{Evaluation.} We evaluate our methods using two datasets—AlpacaEval2 \cite{dubois2024LCalpacaeval} and Arena-Hard \cite{arenahard2024_ArenaHard}—both of which are publicly available and licensed for research purposes. All assets used in this work are provided with clear attribution and comply with the licensing terms of the respective sources. Full license details and terms of use for each artifact can be found in the accompanying repository and documentation.


\noindent\textbf{Dataset.} The original UltraFeedback dataset contains 64k prompts, each accompanied by four model completions from various open and proprietary models. GPT-4 was then used to assign scores to each completion based on criteria such as usefulness and honesty. To create UltraFeedback Binarized, we selected the highest overall score as the "chosen" completion and randomly selected one of the remaining three as the "rejected" completion. This defines the preference modeling split for techniques like reward modeling or DPO. We also created splits for supervised fine-tuning (SFT), using the "chosen" column as the conversation to be modeled, and for generations (such as rejection sampling or PPO). For detailed information on dataset processing, please refer to the accompanying scripts.



\section{More Evaluation}\label{apd:mt_bench}
We also evaluation our methods on MT-Bench~\cite{mt_bench}, results are shown in Table~\ref{tab:mt_bench}.
\setlength{\tabcolsep}{3pt}
\begin{table*}[!t]
\small
\centering
\resizebox{1.0\textwidth}{!}{
\begin{tabular}{l|cccc|c}
\toprule
\textbf{Method} & \textbf{Llama3-Instruct (8B)} & \textbf{Mistral-Instruct (7B)} & \textbf{Gemma2-Instruct (9B)} & \textbf{Qwen2.5-Instruct (7B)} & Avg. (rank)\\
\midrule
SFT & 8.1 & 7.5 & 8.4 & 8.5 & 8.13(11) \\
IPO & 9.1 & 8.4 & 9.2 & 8.8 & 8.88(8)\phantom{k} \\
KTO & 9.1 & 7.9 & 9.2 & 8.9 & 8.78(10) \\
CPO & 8.9 & 8.8 & 8.8 & 8.9 & 8.85(9)\phantom{k} \\
ORPO & 8.8 & \underline{\textbf{9.1}} & 8.9 & 8.9 & 8.93(7)\phantom{k}\\
R-DPO & 9.0 & 8.8 & 9.2 & \underline{9.0} & 9.00(5)\phantom{k}\\
DPO & 9.1 & 8.9 & 9.2 & \underline{9.0} & \underline{9.05}(3)\phantom{k}\\
$\beta$-DPO & \underline{9.2} & 8.7 & \underline{\textbf{9.3}} & \underline{9.0} & \underline{9.05}(3)\phantom{k} \\
SimPO & 9.0 & 8.9 & 9.0 & \underline{9.0} & 8.98(6)\phantom{k}\\
\midrule
$\gamma$-DPO & 9.2 & 8.8 & \textbf{9.3} & \textbf{9.1} & 9.10(2)\phantom{k}\\
$\gamma$-SimPO & \textbf{9.3} & \textbf{9.1} & 9.2 & \textbf{9.1} & \textbf{9.18(1)}\phantom{k} \\
\midrule
Improve & \color{+}{+1.1\%} & \color{+}{+0.0\%} & \color{+}{+0.0\%} & \color{+}{+1.1\%} & \color{+}{+1.4\%} \\
\bottomrule
\end{tabular}}
\caption{The Win Rate from MT-Bench~\cite{mt_bench} across different methods. \textbf{Bold}
indicates the best performance for each metric, while \underline{underlined} values represent the best performance excluding our methods, \ie $\gamma$-DPO and $\gamma$-SimPO. ``Improve'' denotes the percentage improvement of the bold value over the underlined one. This metric is considered better when it has a higher value.}
\label{tab:mt_bench}
\end{table*}
\setlength{\tabcolsep}{2pt}

\end{document}